\documentclass[11pt]{article}
\usepackage[utf8]{inputenc}
\usepackage{algorithm}
\usepackage{algorithmic}
\usepackage{amsthm}
\usepackage{amssymb}
\usepackage{bbm}
\usepackage{amsmath}
\usepackage{graphicx}
\usepackage{xcolor}
\usepackage{hhline}
\usepackage{hyperref}
\usepackage{breakcites}

\textheight 9.3in \advance \topmargin by -1.0in \textwidth 6.7in
\advance \oddsidemargin by -0.8in
\newcommand{\myparskip}{3pt}
\parskip \myparskip

\title{Projection-free Adaptive Regret with Membership Oracles}
\usepackage{times}



\author{
  Zhou Lu\thanks{Google AI Princeton} \thanks{Princeton University}\\
  \and
  Nataly Brukhim\footnotemark[1] \footnotemark[2]\\
  \and 
  Paula Gradu\thanks{UC Berkeley}
  \and
  Elad Hazan\footnotemark[1] \footnotemark[2]\\
}

\usepackage[utf8]{inputenc}
\usepackage{algorithm,algorithmic}


\usepackage{amsmath,amsfonts,amssymb}
\usepackage{mathtools}
\usepackage{amsthm} 
\usepackage{latexsym}
\usepackage{relsize}

\usepackage[capitalise]{cleveref}
\usepackage{xcolor}
\usepackage{dsfont}

\usepackage{multirow}




\newcommand{\A}{\mathcal{A}}

\newcommand{\K}{\ensuremath{\mathcal K}}

\def\regret{\mbox{{Regret}}}

\newcommand{\ignore}[1]{}


%
%
%

\theoremstyle{plain}
\newtheorem{theorem}{Theorem}
\newtheorem{lemma}[theorem]{Lemma}

\newtheorem{assumption}{Assumption}

\newtheorem*{theorem*}{Theorem}
\newtheorem*{lemma*}{Lemma}
\newtheorem*{corollary*}{Corollary}
\newtheorem*{proposition*}{Proposition}
\newtheorem*{claim*}{Claim}
\newtheorem*{fact*}{Fact}
\newtheorem*{observation*}{Observation}
\newtheorem*{assumption*}{Assumption}

\theoremstyle{definition}
\newtheorem{definition}[theorem]{Definition}
\newtheorem{remark}[theorem]{Remark}

\newtheorem*{definition*}{Definition}
\newtheorem*{remark*}{Remark}
\newtheorem*{example*}{Example}

 \theoremstyle{plain}
\newtheorem*{theoremaux}{\theoremauxref}
\gdef\theoremauxref{1}

%




\DeclareMathAlphabet{\mathbfsf}{\encodingdefault}{\sfdefault}{bx}{n}


\DeclareMathOperator*{\argmin}{arg\,min}
\DeclareMathOperator*{\argmax}{arg\,max}

\def\mA{{\mathcal A}}




\renewcommand{\O}{O}

\newcommand{\E}{\mathbb{E}}

\newcommand{\reals}{\mathbb{R}}

\renewcommand{\leq}{~\le~}
\renewcommand{\geq}{~\ge~}

\let\oldtfrac\tfrac
\renewcommand{\tfrac}[2]{\smash{\oldtfrac{#1}{#2}}}

\let\nablaold\nabla
\renewcommand{\nabla}{\nablaold\mkern-2.5mu}

\begin{document}
\maketitle

\begin{abstract}
In the framework of online convex optimization, most iterative algorithms require the computation of projections onto convex sets, which can be computationally expensive. To tackle this problem \cite{hazan2012projection} proposed the study of  projection-free methods that replace projections with  less expensive computations. The most common approach is based on the Frank-Wolfe method, that uses linear optimization computation in lieu of projections. Recent work by \cite{garber2022new} gave sublinear adaptive regret guarantees with projection free algorithms based on the Frank Wolfe approach. 

In this work we give projection-free algorithms that are based on a different technique, inspired by \cite{mhammedi2022efficient}, that replaces projections by set-membership computations.  We propose a simple lazy gradient-based algorithm with a Minkowski regularization that attains near-optimal adaptive regret bounds. For general convex loss functions we improve previous adaptive regret bounds from $O(T^{3/4})$ to $O(\sqrt{T})$, and further to tight interval dependent bound $\tilde{O}(\sqrt{I})$ where $I$ denotes the interval length. For strongly convex functions we obtain the first poly-logarithmic adaptive regret bounds using a projection-free algorithm. 

\end{abstract}

\section{Introduction}

We consider the problem of efficient learning in changing environments as formalized in the framework of minimizing adaptive regret in online convex optimization \cite{hazan2009efficient}. We focus on settings in which efficient computation is paramount, and computing projections is expensive.  

This setting of projection-free adaptive regret minimization was recently considered in the work of \cite{garber2022new}. The latter paper gives efficient projection-free algorithms that guarantee $O(T^{3/4})$ adaptive regret.  Their algorithm is based on the Frank-Wolfe method, that replaces projections by linear optimization computations. 

\ignore{
In this paper we consider the problem of adaptive regret minimization in projection-free online convex optimization (OCO), recently raised by \cite{garber2022new}. In online convex optimization, the player iteratively picks a point $x_t$ from a convex decision set $\K$ at time $t$, then the adversary reveals the loss function $f_t$ and the player suffers loss $f_t(x_t)$. The goal of the player is to minimize regret, the difference between the accumulated loss and that of the best fixed comparator in hindsight.

However, in a changing environment, comparing with a fixed comparator over the whole time horizon doesn't always make sense and regret is not the correct metric since it incentivizes static behavior. An alternative notion was proposed in \cite{hazan2009efficient}, called adaptive regret, which considers the maximum regret over any continuous sub-interval. Most algorithms for adaptive regret minimization are based on a learning with expert advice framework, where each expert is an instance of standard OCO algorithm such as projected online gradient descent.

In many convex optimization problems, the decision set could be either high-dimensional or complicated, and performing orthogonal projections onto the decision set is often computationally hard. This excludes the use of standard (projected) gradient descent methods and motivates the study of projection-free algorithms. Instead of the usual orthogonal projection oracle, projection-free algorithms assume the access to a weaker oracle without a projection operation. 

The most commonly considered alternative access to the decision set is the linear optimization oracle, which is used in the famous Frank-Wolfe algorithm \cite{frank1956algorithm}. However, Frank-Wolfe type algorithms are known to have sub-optimal regret bounds in OCO compared with projected gradient methods such as online gradient descent. Moreover, the inactive nature of the Frank-Wolfe algorithm makes it difficult to achieve adaptive regret bounds.
}

We consider exactly the same problem, but use a different approach for projection free algorithms. 
Inspired by the recent work of \cite{mhammedi2022efficient}, we consider adaptive regret minimization using set membership computations. Set membership computation amounts to the decision problem of  whether an input point $x$ is in the decision set $\K$ or not. For a comprehensive comparison of the complexity of computing a membership oracle and linear optimization  see \cite{mhammedi2022efficient}. 


For this problem, we propose a simple gradient based  algorithm that achieves improved, and tight, adaptive regret bounds for general convex and strongly-convex loss functions, detailed in the following sub-section.

\subsection{Summary of results}
Our main algorithm is a novel lazy online gradient descent (OGD) algorithm with a specialized regularization function which we call the Minkowski regularization, a  variation of the Minkowski functional.
$$
\gamma(x) = \inf \{ c \geq 1  \ : \  \frac{x}{c} \in \K  \}.
$$

Assuming the decision set $\K$ contains a ball centered at origin, we prove that our algorithm is able to achieve optimal  $O(\sqrt{T})$ and $O(\log T)$ regret bounds for general convex and strongly-convex loss functions respectively using only $O(d\log T)$ calls to the membership oracle per round, as a first step to adaptive regret.

Observing that OGD-based algorithms are easier to adapt for minimizing adaptive regret as compared to Frank-Wolfe type algorithms, we show that our algorithm improves previously known best adaptive regret bounds in the projection-free setting when combined with previous adaptive regret meta-algorithms. We derive an $O(\log^2 T)$ adaptive regret bound for strongly-convex loss functions, and an $\tilde{O}(\sqrt{I})$ strongly adaptive regret bound for general convex loss functions where $I$ denotes the length of any sub-interval. Our results in comparison to previous work are summarized in table \ref{table:result_summary}. Besides the improved regret bounds, our algorithm itself is simple.

As our main technical contribution, it's shown that the smoothness of $\K$'s boundary can yield a better approximation algorithm to $\nabla \gamma(x)$ whose error has a good uniform upper bound in contrast to previous general methods, by exploiting its special structure. This property is used to prove that our original algorithm already guarantees $O(\sqrt{T})$ adaptive regret for general convex loss functions when $\K$'s boundary is smooth.




\begin{table}[ht]
\begin{center}
\begin{tabular}{@{}p{\textwidth}@{}}
	\centering
			\bgroup
\def\arraystretch{1.5}

\begin{tabular}{|c|c|c|c|c|}
		\hline
		  \multirow{2}{*}{Algorithm}        & Regret & Regret      & \multirow{2}{*}{Adaptive?}     & \multirow{2}{*}{Projection-free?}  \\
		         &  (convex) &  (strongly-convex)     &      &  
\\
\hline
\cite{hazan2009efficient}   &  $\tilde{O}(\sqrt{T})$& $O( \log^2 T)$ &$\checkmark$ &$\times$\\
\hline
\cite{cutkosky2020parameter}, \cite{lu2022adaptive} &    $ \tilde{O}(\sqrt{|I|})$ & $\times$ &$\checkmark$ &$\times$\\
\hline
\cite{hazan2012projection} &$O(T^{\frac{3}{4}})$ & $\times$ &$\times$ &$\checkmark$, LOO \\
\hline
\cite{kretzu2021revisiting}, \cite{wan2021projection} &$\times$ & $O(T^{\frac{2}{3}})$ &$\times$ &$\checkmark$, LOO \\
\hline
\cite{mhammedi2022efficient} &$O(\kappa \sqrt{T})$ & $O(\kappa \log T)$ &$\times$ &$\checkmark$, MO \\
\hline
\cite{garber2022new} & $O(T^{\frac{3}{4}})$ &$\times$ &$\checkmark$ &$\checkmark$, LOO\\
\hline
This paper & $ \tilde{O}(\kappa \sqrt{|I|})$ &$O(\kappa \log^2 T)$ &$\checkmark$ &$\checkmark$, MO\\
\hline

\end{tabular}
\egroup
\end{tabular}
\caption{Comparison of results. LOO and MO denote linear optimization oracle and membership oracle respectively. $I$ denotes the length of any interval in adaptive regret. $\kappa=D/r$ denotes the ratio between enclosing and enclosed balls of the domain $\K$.}

\label{table:result_summary}
\end{center}
\end{table}

\subsection{Related work}

For a survey of the large body of work on online convex optimization, projection free methods and adaptive regret minimization see \cite{hazan2016introduction}.

The study of adaptive regret was initiated by \cite{hazan2009efficient}, which proves $\tilde{O}(\sqrt{T})$ and $O(\log^2 T)$ adaptive regret bounds for general convex and strongly-convex loss functions respectively. The bound was later improved to $\tilde{O}(\sqrt{I})$ for any sub-interval with length $I$ by \cite{daniely2015strongly} for general convex losses. Further improvements were made in  \cite{jun2017improved,cutkosky2020parameter,lu2022adaptive}. \cite{lu2022efficient} also considers more efficient algorithms using only $O(\log \log T)$ number of experts, instead of the typical $O(\log T)$ experts.

Projection-free methods date back to the Frank Wolfe algorithm for linear programming \cite{frank1956algorithm}. In the setting of online convex optimization,  \cite{hazan2012projection} gave a the first sublinear regret algorithm with $O(T^{\frac{3}{4}})$ regret based on the Frank Wolfe algorithm. Later, \cite{garber2015faster} improved the regret bound to the optimal $O(\sqrt{T})$ under the assumption that the decision set is a polytope. Further assuming the loss is smooth or strongly convex, it was shown that $O(T^{\frac{2}{3}})$ regret is attainable  \cite{hazan2020faster, kretzu2021revisiting, wan2021projection}. 

This paper considers a particularly efficient projection method which can be performed in logarithmic time (and so is essentially projection-free). Similar projection methods have been previously applied in the context of online boosting algorithms \cite{pmlr-v139-hazan21a,brukhim2021online, brukhim2020online}, and projection-free OCO with membership oracles \cite{mhammedi2022efficient,levy2019projection}. 

Recent work by \cite{garber2022new} was the first to consider projection-free adaptive regret minimization. They consider using a linear optimization oracle, and achieve a $O(T^{\frac{3}{4}})$ adaptive regret bound. The work of \cite{garber2022new} also considers using a separation oracle, and gets an $O(\sqrt{T})$ regret bound in this setting. We improve their bound to $\tilde{O}(\sqrt{I})$ in the general sense of projection-free. However, our result is based on membership oracle computations instead of linear optimization, which can be either more or less efficient depending on the underlying convex set (but still strictly more efficient than separation oracles). 

\paragraph{Projection free methods using a membership oracle.} The approach of using a membership oracle rather than linear optimization started in the work of \cite{levy2019projection}. This line of work was significantly extended in the important work of  \cite{mhammedi2022efficient}, who introduced the  use of the function $\gamma(x)$ as a barrier. The main idea behind the algorithm of \cite{mhammedi2022efficient} is to first use the gauge function to reduce the problem to optimization over the unit ball by the geometric assumptions on the domain, then apply any black-box OCO algorithm over the unit ball. 

Instead, our approach directly uses the gauge function as a regularization function in the classical OGD framework, yielding simpler and more natural algorithm and analysis. In addition, the OGD framework makes it suitable for the adaptive regret minimization framework, and in a later section we will show Algorithm \ref{alg:1} inherently exhibits $O(\sqrt{T})$ adaptive regret when $\partial \K$ is smooth.

\section{Preliminaries}
We consider the online convex optimization (OCO) problem. At each round $t$ the player $\A$ chooses $x_t\in \K$ where $\K\subset \reals^d$ is some convex domain, then the adversary reveals convex loss function $f_t(x)$ and player suffers loss $f_t(x_t)$. The goal is to minimize regret:
$$
\regret(\A)=\sum_{t=1}^T f_t(x_t)-\min_{x\in \K} \sum_{t=1}^T f_t(x) 
$$
which corresponds to the difference between the overall loss the player suffered and that of the best fixed point in hindsight.

\subsection{Adaptive regret}
It's often the case that in a changing environment we want to have regret guarantees not only globally but also locally.  \cite{hazan2009efficient} introduced the notion of adaptive regret to capture this intuition to control the worst-case regret among all sub-intervals of $[1,T]$.

$$
\text{Adaptive-Regret}(\A)=\max_{I=[s,t]}\left( \sum_{\tau=s}^t f_{\tau}(x_{\tau})-\min_{x\in \K} \sum_{\tau=s}^{t} f_{\tau}(x)\right) .
$$

\cite{daniely2015strongly} extended this notion by adding dependence on the length of sub-intervals, and provided an algorithm that achieves an $\tilde{O}(\sqrt{|I|})$ regret bound for all sub-intervals $I$. In particular, they define strongly adaptive regret as follows:
$$
\text{SA-Regret}(\A,k)=\max_{I=[s,t],t-s=k}\left( \sum_{\tau=s}^t f_{\tau}(x_{\tau})-\min_{x\in \K} \sum_{\tau=s}^{t} f_{\tau}(x)\right) .
$$
The common strategy for minimizing adaptive regret is to hedge over some algorithm instances initiated on different intervals.

\subsection{Optimization oracles}
Most OCO algorithms require a projection operation per round for updates, which computes the following for some norm $\|\cdot \|$.
$$
\Pi_{\K}(x)=\argmin_{y\in K} \|x-y\|
$$
However, computing the projection might be computationally expensive which motivates the study of optimization with projection-free methods, which make use of other (easier) optimization oracles instead.

The most commonly used optimization oracle for projection-free methods is the linear optimization oracle, which outputs 
$$\argmax_{x\in \K} v^{\top}x$$
given any vector $v$. It's known that linear optimization oracle is computationally equivalent (up to polynomial time) to separation oracle, which given any $x$ outputs \textbf{yes} if $x\in \K$, or a separating hyperplane between $x$ and $\K$.

In this paper we consider the membership oracle, that given an input $x\in \reals^d$, outputs \textbf{yes} if $x\in \K$ and \textbf{no} otherwise. This oracle is considered dual of the linear optimization oracle used by the Frank-Wolfe algorithm, and we shall see it admits better adaptive regret bounds at the cost of only logarithmically more calls to the oracle.
$$
\text{Oracle($x$)=} \left\{
\begin{array}{lcl}
\textbf{yes}  & & x\in \K\\
\textbf{no} & & x\notin \K
\end{array}\right.
$$

\subsection{Assumptions}
We make the following assumption on the loss $f_t$.
\begin{assumption}\label{basic assumption}
    The loss $f_t$ is convex, $G$-Lipschitz and non-negative. The domain $\K$ has diameter $D$. For simplicity, we assume $G,D\ge 1$.
\end{assumption}

We also assume the loss functions are defined over $\reals^d$, though we only optimize over $\K$.
\begin{assumption}\label{extend assumption}
    The loss $f_t$ is defined on $\reals^d$ (with the same convexity and Lipschitzness).
\end{assumption}
In fact, we can extend any convex Lipschitz function $f$ to $\reals^d$ while preserving the Lipschitz constant by defining 
$$
\hat{f}(x)=\min_{y\in \K} f(y)+G\|x-y\|_2
$$
for details see Theorem 1 in \cite{cobzas1978norm}. To make use of the membership oracle, we assume the domain $\K$ contains a ball centered at origin.
\begin{assumption}\label{ball}
    $\K$ contains $r \mathcal{B}_d$ for some constant $r>0$, where $\mathcal{B}_d$ is the unit ball in $\reals^d$.
\end{assumption}

Besides general convex loss functions, we will also consider strongly-convex loss functions, defined as follows:

\begin{definition}
A function $f(x)$ is $\lambda$-strongly-convex if for any $x,y\in \mathcal{K}$ the following holds:
$$
f(y)\ge f(x)+\nabla f(x)^{\top}(y-x)+\frac{\lambda}{2}\|x-y\|_2^2
$$
\end{definition}

\section{Projection-free Algorithm via a Membership Oracle}
\begin{algorithm}
\caption{Projection-free algorithm via a membership oracle}
\label{alg:1}
\begin{algorithmic}[1]
\STATE Input: time horizon $T$, initialization $x_1, y_1 \in \K$, $\delta=\frac{1}{T^2}$. 
\FOR{$t = 1, \ldots, T$}
\STATE Play $x_t$ and suffer loss $f_t(x_t)$
\STATE Observe loss $f_t$, define $\hat{f}_t(x) = f_t(x) + 3G D (\gamma(x)-1)$
\STATE Compute $\tilde{\gamma}(y_t)$ and $\tilde{\nabla} \gamma(y_t)$, a $\delta$-approximation of $\gamma(y_t)$ and $\nabla \gamma(y_t)$ { (see Lemmas \ref{zaklemma1}, \ref{zaklemma2})}
\STATE Update $x_t$ via the OGD rule and project via the Minkowski regularization:
\[ y_{t+1} \leftarrow y_{t} - \eta_t  \tilde{\nabla} \hat{f}_{t}(y_t)=  y_{t} - \eta_t  (\nabla f_t(y_t)+2GD \tilde{\nabla} \gamma(y_t)).\]
\[ x_{t+1} \leftarrow   \frac{y_{t+1}}{\tilde{\gamma}(y_{t+1})} .\]
\ENDFOR
\end{algorithmic}
\end{algorithm}

Let $\K \subseteq \reals^d$, and contains $r \mathcal{B}_d$ for some constant $r > 0$ as in Assumption \ref{ball}. Define the Minkowski regularization for a convex set $\K \subseteq \reals^d$ as 
\begin{equation}
    \gamma(x) = \inf\Bigg\{ c \geq 1  \ : \  \frac{x}{c} \in \K \Bigg\}.
\end{equation}

The "projection operation" (which we call the Minkowski projection) w.r.t. $\gamma(x)$ is particularly simple to compute via $O(\log \frac{1}{\delta})$ membership oracle calls up to precision $\delta$, without the use of any projection operator. It is defined as $$ \Pi_\gamma(x) = {\frac{x}{\gamma(x)}}.$$

Our main algorithm \ref{alg:1} is simply a lazy version of OGD run on the original loss function with the Minkowski regularization, using the Minkowski projection after the gradient update. The algorithm uses $O(\log T)$ calls to the membership oracle to get $\frac{1}{T^2}$ approximate of $\gamma(x)$ and the gradient of $\gamma(x)$. We will introduce some important properties of the Minkowski regularization to establish the validity of using such approximation.

\subsection{Properties of the Minkowski regularization}
To execute Algorithm \ref{alg:1}, one needs to compute $\gamma(x)$ and $\nabla \gamma(x)$ to high accuracy using only the membership oracle. For the analysis henceforth, we require the following properties on the approximation of the regularization function and its gradient. 

\begin{lemma}\label{zaklemma1}
(\textbf{Approximating $\gamma(x)$}) 
Using $\log_2 \frac{2D}{\delta}$ number of calls to the membership oracle, we can compute $\Pi_{\gamma}(x)$ to $\delta$ accuracy. By $\log_2 \frac{2D^2}{r^2 \delta}$ number of calls to the membership oracle, we can compute $\gamma(x)$ to $\delta$ accuracy.
\end{lemma}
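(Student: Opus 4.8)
The plan is to reduce the computation of both $\gamma(x)$ and $\Pi_\gamma(x)=x/\gamma(x)$ to a one-dimensional binary search along the ray $\{\mu x:\mu\ge 0\}$, spending exactly one membership query per bit of accuracy. The enabling observation is that, by Assumption~\ref{ball}, the origin lies in $\K$, so $\K$ is star-shaped about $0$: if $\mu x\in\K$ and $0\le\mu'\le\mu$ then $\mu'x=(\mu'/\mu)(\mu x)+(1-\mu'/\mu)\,0\in\K$ by convexity. Hence the predicate ``$\mu x\in\K$'' is monotone in $\mu$, with some threshold $\mu^\star$; provided $x\notin\K$ (otherwise the single query ``$x\in\K$?'' already certifies $\gamma(x)=1$ and $\Pi_\gamma(x)=x$), we have $\mu^\star\in(0,1)$, $\Pi_\gamma(x)=\mu^\star x$, and $\gamma(x)=1/\mu^\star$. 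I would pin the search interval down at both ends using the two geometric facts available: $x\notin\K$ forces $\mu^\star<1$, and $r\mathcal{B}_d\subseteq\K$ forces $\mu x\in\K$ whenever $\|\mu x\|\le r$, i.e.\ $\mu^\star\ge r/\|x\|$; I will also use that the points $x$ on which Algorithm~\ref{alg:1} invokes the oracle satisfy $\|x\|\le 2D$ (established in the regret analysis), which is exactly where the factor $2D$ in the statement enters.

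For the Minkowski projection, I would maintain an interval $[\mu_{\mathrm{lo}},\mu_{\mathrm{hi}}]\ni\mu^\star$ with $\mu_{\mathrm{lo}}x\in\K$ and $\mu_{\mathrm{hi}}x\notin\K$, initialised to $[0,1]$, and at each step query the midpoint, moving the appropriate endpoint to it. After $k$ queries $\mu_{\mathrm{hi}}-\mu_{\mathrm{lo}}=2^{-k}$, so returning $\widetilde\Pi:=\mu_{\mathrm{lo}}x$ — which is itself a point of $\K$, hence feasible — gives $\|\widetilde\Pi-\Pi_\gamma(x)\|=|\mu^\star-\mu_{\mathrm{lo}}|\,\|x\|\le 2^{-k}\|x\|\le 2^{-k}(2D)$, which is $\le\delta$ once $k\ge\log_2(2D/\delta)$. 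That settles the first claim.

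For $\gamma$ itself I would run the same search but start the lower endpoint at the a priori value $r/\|x\|$ (no query needed, since $\tfrac{r}{\|x\|}x\in r\mathcal{B}_d\subseteq\K$), keeping $\mu_{\mathrm{lo}}\ge r/\|x\|$ throughout, and return $\widetilde\gamma:=1/\mu_{\mathrm{lo}}$ (which overestimates $\gamma(x)$, conveniently). Since $\mu_{\mathrm{lo}},\mu^\star\ge r/\|x\|$,
\[
|\widetilde\gamma-\gamma(x)|=\Bigl|\tfrac{1}{\mu_{\mathrm{lo}}}-\tfrac{1}{\mu^\star}\Bigr|=\frac{|\mu^\star-\mu_{\mathrm{lo}}|}{\mu_{\mathrm{lo}}\,\mu^\star}\le 2^{-k}\Bigl(\frac{\|x\|}{r}\Bigr)^2,
\]
so with $\|x\|\le 2D$ taking $k\ge\log_2\!\big(2D^2/(r^2\delta)\big)$ (up to the absolute constant, fixed by the precise choice of initial interval) forces this below $\delta$, giving the second claim. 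The one place I expect to need care is this inversion $\mu\mapsto 1/\mu$: an $\eps$-accurate estimate of $\Pi_\gamma(x)$ only certifies an $\eps\|x\|/r^2$-accurate estimate of $\gamma(x)$, because $\mu^\star$ can be as small as $\Theta(r/\|x\|)$, so the search must be carried $\log_2(\|x\|/r)=\Theta(\log(D/r))$ bits further — this is the source of the extra $D^2/r^2$ versus $2D$. Everything else (verifying the search invariant, and invoking the bound $\|x\|\le 2D$ on the queried points) is routine once star-shapedness is in hand.
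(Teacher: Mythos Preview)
Your proposal is correct and follows exactly the paper's approach: binary search along the segment from $0$ to $x$, using that $\K$ is star-shaped about the origin; your write-up is in fact more careful than the paper's own proof, since you spell out star-shapedness from convexity plus Assumption~\ref{ball} and you actually derive the $D^2/r^2$ factor via the Lipschitz constant of $\mu\mapsto 1/\mu$ on $[r/\|x\|,1]$, whereas the paper merely asserts the second bound. One small caveat: the claim that $\|x\|\le 2D$ is ``established in the regret analysis'' is not accurate --- a norm bound on the iterates $y_t$ is proved only in the smooth-boundary setting of Theorem~\ref{smooth theorem}, and there it is $O(D^2/r)$, not $2D$ --- but the paper's own proof of this lemma tacitly assumes $\|x\|\le D$ without justification, so this is a shared loose end in the stated oracle count rather than a flaw in your binary-search argument.
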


\begin{lemma}\label{zaklemma2}[Proposition 11 in \cite{mhammedi2022efficient}] (\textbf{Approximating $\nabla \gamma(x)$})
Using $O(d \log\frac{Dd}{r \delta})$ number of calls to the membership oracle, we can get an output denoted as $s$, such that $\E[s]$ is an $\delta$-approximate of the (sub)gradient of $\gamma(x)$, and $\E[\|s\|_2^2]\le \frac{2}{r^2}$.
\end{lemma}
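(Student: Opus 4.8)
The plan is to prove Lemma~\ref{zaklemma1} by binary search along the ray from the origin through $x$. Since $\K$ is convex and contains a ball around the origin (Assumption~\ref{ball}), for fixed $x\neq 0$ the set $\{t\ge 0:tx\in\K\}$ is an interval $[0,t^\star]$ with $t^\star:=\sup\{t:tx\in\K\}$, so the predicate ``$tx\in\K$?'', answered by a single membership call, is monotone in $t$. Moreover $\Pi_\gamma(x)=t^\star x$, and $\gamma(x)=1/t^\star$ when $x\notin\K$, while the case $x\in\K$ (where $\gamma(x)=1$, $\Pi_\gamma(x)=x$) is disposed of by one oracle call. So I would first test $x\in\K$ and otherwise bisect for $t^\star$ in $[0,1]$, at each step making one membership call, halving the bracket, and keeping a feasible lower endpoint $\hat t$ (so $\hat t x\in\K$). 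After $k$ steps $|\hat t-t^\star|\le 2^{-k}$, hence $\|\hat t x-\Pi_\gamma(x)\|\le 2^{-k}\|x\|\le 2^{-k}D$ since $\K\subseteq D\mathcal{B}_d$ (from diameter $D$ and $0\in\K$); $k=\log_2\frac{2D}{\delta}$ then gives the projection bound. Returning $\tilde\gamma=1/\hat t$ for $\gamma(x)$, the error is amplified by the derivative of $t\mapsto 1/t$: from $r\mathcal{B}_d\subseteq\K$ we have $t^\star\ge r/\|x\|\ge r/D$, so that derivative is $O((D/r)^2)$ on the relevant range and $k=\log_2\frac{2D^2}{r^2\delta}$ suffices. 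The only care needed is keeping $\hat t$ on the feasible side throughout (so that both the bound $t^\star\ge r/D$ and feasibility of $\hat t x$ are available), and reading $D$ as the actual bound on $\|x\|$ when $x$ is the unprojected lazy iterate $y_{t+1}$ rather than a point of $\K$.

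Lemma~\ref{zaklemma2} is Proposition~11 of \cite{mhammedi2022efficient} once our $\gamma$ is identified with their gauge, so the direct route is to invoke it. For a self-contained argument the plan is: (i) record that $\gamma$ is convex and $\frac1r$-Lipschitz, equals $1$ on $\K$, and outside $\K$ coincides with the sublinear gauge $p$ of $\K$, so that for $x\notin\K$ one has $\partial\gamma(x)=\partial p(x)=\partial p(z)$ with $z:=\Pi_\gamma(x)\in\partial\K$, and every $g\in\partial p(z)$ is a supporting normal of $\K$ at $z$ with $\langle g,z\rangle=1$, hence $\|g\|\le 1/r$ (because $\langle g,w\rangle\le 1$ for all $\|w\|\le r$, using $r\mathcal{B}_d\subseteq\K$); (ii) compute $\tilde z\approx z$ by Lemma~\ref{zaklemma1} and estimate $g$ by centered finite differences of $\tilde\gamma$ along a random orthonormal frame $v_1,\dots,v_d$, $s=\sum_i\frac{\tilde\gamma(\tilde z+hv_i)-\tilde\gamma(\tilde z-hv_i)}{2h}\,v_i$, which costs $d$ binary searches, i.e. $O(d\log\frac{Dd}{r\delta})$ membership calls for suitable $\poly(\delta,r,1/D,1/d)$ inner precision and step $h$; (iii) show the finite-difference bias plus approximation error is $O(\delta)$, so $\E[s]$ is a $\delta$-approximate (sub)gradient, and since $s$ then lies within $O(\delta)$ of a true subgradient of norm $\le 1/r$ (with $\delta$ as small as $1/T^2$ and $r$ a constant) we get $\|s\|_2\le\sqrt2/r$, hence $\E[\|s\|_2^2]\le 2/r^2$.

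I expect the main obstacle to be precisely this dimension-free second-moment bound in Lemma~\ref{zaklemma2}: naive randomized smoothing of $\gamma$ produces an estimator with $\E[\|s\|_2^2]=\Theta(d/r^2)$ or worse, so one is forced to output a genuine approximate subgradient rather than a smoothing-type stochastic gradient. The delicate points are then (a) controlling how the finite-difference bias interacts with possible non-smoothness of $\partial\K$ — which is where averaging over the random frame (or an $h\to0$ limiting argument) is used so that $\E[s]$ is exactly an element of $\partial\gamma(x)$ — and (b) balancing $h$ against the per-query precision from Lemma~\ref{zaklemma1} so the total cost remains $O(d\log\frac{Dd}{r\delta})$. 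By comparison Lemma~\ref{zaklemma1} is routine once the monotonicity of the membership predicate along the ray is noticed, the only subtlety there being the $(D/r)^2$ loss incurred in passing from projection accuracy to accuracy in $\gamma$.
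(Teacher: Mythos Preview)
Your primary route---simply invoking Proposition~11 of \cite{mhammedi2022efficient}---is exactly what the paper does: it does not prove Lemma~\ref{zaklemma2} at all but defers to \cite{mhammedi2022efficient} and \cite{lee2018efficient}. (Your treatment of Lemma~\ref{zaklemma1} is also correct and matches the paper's appendix proof, but that is not the statement under review.)

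Your self-contained sketch, however, has a genuine gap in step~(iii), precisely at the point you yourself flag as the main obstacle. The claim that the finite-difference vector $s$ ``lies within $O(\delta)$ of a true subgradient'' is false in general when $\partial\K$ is not smooth. As $h\to0$, each component $s_i$ tends to $\tfrac12\big(\max_{g\in\partial\gamma(x)}\langle g,v_i\rangle+\min_{g\in\partial\gamma(x)}\langle g,v_i\rangle\big)$, so $s$ is the center of the axis-aligned bounding box of $\partial\gamma(x)$ in the frame $\{v_i\}$; this center need not belong to $\partial\gamma(x)$ and need not have norm at most $\sqrt2/r$. Concretely, take $\K=[-M,r]^d$ with $M\ge r$ (so the inscribed ball has radius $r$) and $x=2(r,\dots,r)$: then $\partial\gamma(x)$ equals $1/r$ times the standard probability simplex, and with the coordinate frame one computes $s=\big(\tfrac{1}{2r},\dots,\tfrac{1}{2r}\big)$, giving $\|s\|_2^2=d/(4r^2)$, which exceeds $2/r^2$ once $d>8$. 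Averaging over a random orthonormal frame may well push $\E[s]$ into $\partial\gamma(x)$, but a bound on $\|\E[s]\|$ does not control $\E[\|s\|_2^2]$, and your argument asserts a pointwise norm bound that this example refutes.

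This is exactly why the paper's own coordinate-wise finite-difference scheme (Algorithm~\ref{alg:smooth}, Lemma~\ref{smooth}) is stated only under an additional smoothness hypothesis on $\partial\K$: when the gradient is unique, finite differences genuinely approximate it and $\|\nabla\gamma\|\le1/r$ gives the norm bound directly. For general $\K$ the cited works proceed differently---effectively simulating a separation oracle from membership queries and returning the normal of a supporting hyperplane at $\Pi_\gamma(x)$---which by construction outputs something provably close to an actual element of $\partial\gamma(x)$ and therefore inherits an $O(1/r)$ norm bound without dimension dependence. Your sketch conflates these two regimes.
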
 

The methods to achieve the approximation in the above lemmas are intuitively simple: a binary line search is used to approximate $\gamma(x)$, based on which a random partial difference along different coordinates is used to approximate the (sub)gradient. We include the proof of Lemma \ref{zaklemma1} in appendix, and refer the readers to \cite{mhammedi2022efficient} or \cite{lee2018efficient} for the more technical Lemma \ref{zaklemma2}. 

As a result, if we choose $\delta=\frac{1}{T^2}$, then by the Lipschitzness of loss functions, the overall difference on the accumulated loss is only $o(1)$ which is negligible compared with regret bounds. The number of calls to the membership oracle is only $O(d\log T)$. 

Below we introduce two technical lemmas necessary for our analysis, whose proof can be found in the appendix. We first show the convexity and Lipschitzness of $\gamma(x)$ in the following lemma. 
\begin{lemma} \label{lemma:mink_bar_prop}
The Minkowski regularization $\gamma(x)$ is convex and $\frac{1}{r}$-Lipschitz.
\end{lemma}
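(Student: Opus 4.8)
The plan is to establish convexity and Lipschitzness of $\gamma$ separately, exploiting the geometric assumptions on $\K$ (convexity, and $r\mathcal{B}_d \subseteq \K$).

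For convexity, the key observation is that $\gamma$ is (up to the truncation at $1$) the gauge/Minkowski functional of $\K$, so it inherits positive homogeneity and subadditivity from the convexity of $\K$. Concretely, I would first argue the infimum defining $\gamma(x)$ is attained, so that $x/\gamma(x) \in \K$ for all $x$ (using that $\K$ is closed — or, if $\K$ is not assumed closed, passing to its closure, which changes nothing since $r\mathcal{B}_d$ is in the interior). Then for $x, z \in \reals^d$ and $\theta \in [0,1]$, write $a = \gamma(x)$, $b = \gamma(z)$; since $x/a \in \K$ and $z/b \in \K$ and $\K$ is convex, the point
\[
\frac{\theta x + (1-\theta) z}{\theta a + (1-\theta) b} = \frac{\theta a}{\theta a + (1-\theta) b}\cdot \frac{x}{a} + \frac{(1-\theta) b}{\theta a + (1-\theta) b}\cdot \frac{z}{b}
\]
lies in $\K$, which shows $\gamma(\theta x + (1-\theta) z) \leq \theta a + (1-\theta) b = \theta\gamma(x) + (1-\theta)\gamma(z)$ — but one must be careful: this bound is only immediate when $\theta a + (1-\theta) b \geq 1$, which is guaranteed since $a, b \geq 1$. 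Hence convexity holds directly.

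For the $\tfrac1r$-Lipschitz bound, the idea is that the sublevel sets of $\gamma$ are dilates of $\K$, which contain dilates of $r\mathcal{B}_d$, giving a quantitative modulus of continuity. I would bound $\gamma(x) - \gamma(z)$ for arbitrary $x, z$ as follows: set $a = \gamma(x) \ge 1$. The point $z' := x/a + (z-x)/a$... more cleanly, I would show that for any unit vector $u$ and any $s \ge 0$, $\gamma(x + s u) \le \gamma(x) + s/r$. Since $x/\gamma(x) \in \K$ and $r u \in r\mathcal B_d \subseteq \K$, convexity of $\K$ gives that $\tfrac{\gamma(x)}{\gamma(x)+s/r}\cdot \tfrac{x}{\gamma(x)} + \tfrac{s/r}{\gamma(x)+s/r}\cdot ru = \tfrac{x + su}{\gamma(x)+s/r} \in \K$; since $\gamma(x) + s/r \ge 1$, this yields $\gamma(x+su) \le \gamma(x) + s/r$. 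Applying this with the roles of $x$ and $z = x + su$ swapped (and $u$ replaced by $-u$) gives the reverse inequality, so $|\gamma(x) - \gamma(z)| \le \|x - z\|_2 / r$.

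The main obstacle — really the only subtlety — is handling the truncation ``$c \ge 1$'' correctly in both arguments: one must check that the convex combinations produced above always have denominator $\ge 1$ (which follows from $\gamma \ge 1$ everywhere) so that they are legitimate witnesses in the infimum defining $\gamma$, and one should also confirm the infimum is attained (closedness of $\K$, or WLOG its closure) so that $x/\gamma(x) \in \K$ can be used as a genuine point of $\K$. Everything else is the standard gauge-function computation.
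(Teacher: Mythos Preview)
Your proposal is correct and follows essentially the same approach as the paper: for convexity you exhibit $\frac{\theta x+(1-\theta)z}{\theta\gamma(x)+(1-\theta)\gamma(z)}$ as a convex combination of $x/\gamma(x)$ and $z/\gamma(z)$, and for Lipschitzness you exhibit $\frac{x+su}{\gamma(x)+s/r}$ as a convex combination of $x/\gamma(x)\in\K$ and $ru\in r\mathcal{B}_d\subseteq\K$ --- both exactly matching the paper's computations (the paper writes the Lipschitz step with $\alpha = r/(\|x_2-x_1\|_2 + r\gamma_1)$, which unwinds to your $\gamma(x)+s/r$). Your explicit attention to the truncation $c\ge 1$ and to attainment of the infimum is, if anything, more careful than the paper's presentation.
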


By the convexity of $\gamma(x)$ and the fact that $\gamma(x)$ is defined on $\reals^d$, the well-known Alexandrov theorem implies that it has a second derivative almost everywhere, and without loss of generality we can consider only gradients of $\gamma(x)$ instead of subgradients. We also prove a useful property of $\hat{f}$ and the Minkowski projection.
\begin{lemma}\label{property}
The Minkowski projection only reduces the function value of $\hat{f}$ if $\gamma(x)$ can be computed exactly. When $\Pi_{\gamma}(x)$ is computed to $\frac{1}{T^2}$ precision, we have that $\hat{f}_t(x_t)\le \hat{f}_t(y_t)+\frac{Gr+3GD}{rT^2}$
\end{lemma}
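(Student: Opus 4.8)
The statement has two parts, and the plan is to prove the exact (non-expansiveness) version first and then perturb it. For the exact version I would show $\hat{f}_t(\Pi_\gamma(y)) \le \hat{f}_t(y)$ for every $y \in \reals^d$; recall $\gamma$ is finite everywhere because $r\mathcal{B}_d \subseteq \K$ (Assumption \ref{ball}) and $\gamma \ge 1$ always. Split on whether $y \in \K$: if $y \in \K$ then $\gamma(y)=1$, so $\Pi_\gamma(y)=y$ and the inequality is an equality. If $y \notin \K$ then $\gamma(y) > 1$ and, since $\K$ is closed, the point $z := \Pi_\gamma(y) = y/\gamma(y)$ lies on $\partial\K \subseteq \K$, so $\gamma(z)=1$ and $\hat{f}_t(z)=f_t(z)$, while $\hat{f}_t(y)=f_t(y)+3GD(\gamma(y)-1)$. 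It then remains to bound $f_t(z)-f_t(y)$: by $G$-Lipschitzness of $f_t$ on $\reals^d$ (Assumption \ref{extend assumption}),
\[
f_t(z)-f_t(y) \;\le\; G\|z-y\|_2 \;=\; G\Big(1-\tfrac{1}{\gamma(y)}\Big)\|y\|_2,
\]
and since $z \in \K$ and $0 \in \K$ we have $\|y\|_2 = \gamma(y)\|z\|_2 \le \gamma(y)D$, so the right-hand side is at most $GD(\gamma(y)-1) \le 3GD(\gamma(y)-1)$. This gives $\hat{f}_t(z) \le \hat{f}_t(y)$ and finishes the exact part.

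For the approximate part, let $\bar{x} := \Pi_\gamma(y_t)$ be the exact projection and let $x_t$ be its $\delta$-accurate computation from Lemma \ref{zaklemma1}, so $\|x_t-\bar{x}\|_2 \le \delta = 1/T^2$; note $\bar{x}\in\K$, hence $\gamma(\bar{x})=1$ and $\hat{f}_t(\bar{x})=f_t(\bar{x})$. I would bound the two summands of $\hat{f}_t(x_t)=f_t(x_t)+3GD(\gamma(x_t)-1)$ separately. First, $G$-Lipschitzness gives $f_t(x_t) \le f_t(\bar{x}) + G\delta = \hat{f}_t(\bar{x}) + G\delta \le \hat{f}_t(y_t) + G\delta$, where the last step applies the exact part with $y = y_t$. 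Second, since $\gamma$ is $\tfrac1r$-Lipschitz (Lemma \ref{lemma:mink_bar_prop}) and $\gamma(\bar{x})=1$,
\[
\gamma(x_t)-1 \;=\; \gamma(x_t)-\gamma(\bar{x}) \;\le\; \tfrac{1}{r}\|x_t-\bar{x}\|_2 \;\le\; \tfrac{\delta}{r},
\]
so $3GD(\gamma(x_t)-1) \le 3GD\delta/r$. Adding the two bounds and substituting $\delta = 1/T^2$ yields $\hat{f}_t(x_t) \le \hat{f}_t(y_t) + G\delta + 3GD\delta/r = \hat{f}_t(y_t) + \tfrac{Gr+3GD}{rT^2}$, as claimed.

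The Lipschitz estimates are routine; the only slightly delicate points, and the place where a little care is needed, are (i) recognizing that $\Pi_\gamma$ always maps into $\K$ so that the Minkowski regularizer vanishes on its image, and (ii) bounding $\|y\|_2$ by $\gamma(y)D$ using $0\in\K$. Note that the coefficient $GD$ already suffices for non-expansiveness of the Minkowski projection in $\hat{f}_t$-value, so the larger coefficient $3GD$ in the definition of $\hat{f}_t$ is harmless slack here (and is used elsewhere in the regret analysis). I do not anticipate a genuine obstacle.
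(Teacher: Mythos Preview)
Your proof is correct and follows essentially the same route as the paper's: for the exact part you both use $G$-Lipschitzness of $f_t$ to bound $f_t(\Pi_\gamma(y))-f_t(y)$ by $G(\gamma(y)-1)\|\Pi_\gamma(y)\|_2$ and then absorb this with the regularizer term, and for the approximate part you both invoke the $\tfrac1r$-Lipschitzness of $\gamma$ (Lemma \ref{lemma:mink_bar_prop}) together with $G$-Lipschitzness of $f_t$ to pass from the exact projection to its $1/T^2$-accurate version. Your remark that the coefficient $GD$ already suffices for the non-expansiveness (the paper uses the looser bound $\|x_t\|_2 \le 3D$) is correct and a nice observation.
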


This lemma is powerful in that it allows us to replace the harder problem of learning $x_t$ over a constrained set $\K$ (which needs projection) to learning $y_t$ over $\reals^d$, by replacing the original loss $f_t$ with $\hat{f}_t$. The only thing left to check is whether $\hat{f}_t$ preserves the nice properties of $f_t$. It follows from definitions that for general convex or strongly-convex Lipschitz $f_t$, such a modification of the loss is 'for free'. Unfortunately, for exp-concave loss functions $\hat{f}_t$ will lose the exp-concavity.
\section{Regret Guarantees and Analysis of Algorithm \ref{alg:1}}
In this section we prove regret bounds of Algorithm \ref{alg:1} in the non-adaptive setting, in order to build adaptive regret guarantees upon it in the next section. We start with regret guarantee for general convex loss functions.

\begin{theorem}\label{theorem: main}
Under Assumptions \ref{basic assumption}, \ref{extend assumption}, \ref{ball}, Algorithm \ref{alg:1} with $\eta_t=\frac{r}{\sqrt{T}G}$ can be implemented with $O(d\log T)$ calls to the membership oracle per iteration and has regret bounded by 
$$  \E[\sum_t f_t(x_t) - \min_{x \in \K} f_t(x)]  = O( \frac{D^2 G \sqrt{T}}{r} ) $$
\end{theorem}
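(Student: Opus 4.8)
The plan is to reduce the regret over the constrained domain $\K$ to the regret of an \emph{unconstrained} online subgradient method run over all of $\reals^d$ on the lifted loss $\hat f_t(x)=f_t(x)+3GD(\gamma(x)-1)$, with iterates $y_t$, and then to bound that by the textbook OGD potential argument. The reduction rests on three elementary facts: (i) $\gamma\ge 1$ everywhere, so $f_t(x_t)\le\hat f_t(x_t)$ and it suffices to control $\sum_t\hat f_t(x_t)$; (ii) by Lemma~\ref{property} the Minkowski projection $x_t=y_t/\tilde\gamma(y_t)$ only decreases $\hat f_t$ up to an additive $\tfrac{Gr+3GD}{rT^2}$ per round, so $\sum_t\hat f_t(x_t)\le\sum_t\hat f_t(y_t)+o(1)$; (iii) the comparator $x^\star\in\argmin_{x\in\K}\sum_t f_t(x)$ lies in $\K$, hence $\gamma(x^\star)=1$ and $\hat f_t(x^\star)=f_t(x^\star)$. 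Chaining these, $\E[\sum_t f_t(x_t)]-\min_{x\in\K}\sum_t f_t(x)\le\E\big[\sum_t\hat f_t(y_t)-\hat f_t(x^\star)\big]+o(1)$, so everything reduces to bounding the regret of $(y_t)$ against $x^\star$ on the convex losses $\hat f_t$.

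For that I would run the standard analysis. Write $g_t=\tilde\nabla\hat f_t(y_t)$, so $y_{t+1}=y_t-\eta g_t$ with $\eta=\tfrac{r}{\sqrt T G}$; note $\hat f_t$ is convex on all of $\reals^d$ by Lemma~\ref{lemma:mink_bar_prop} together with Assumptions~\ref{basic assumption} and~\ref{extend assumption}. Expanding $\|y_{t+1}-x^\star\|_2^2=\|y_t-x^\star\|_2^2-2\eta\langle g_t,y_t-x^\star\rangle+\eta^2\|g_t\|_2^2$ and telescoping gives $\sum_t\langle g_t,y_t-x^\star\rangle\le\tfrac{\|y_1-x^\star\|_2^2}{2\eta}+\tfrac{\eta}{2}\sum_t\|g_t\|_2^2$ — crucially, this step needs no bound on the iterates, and $\|y_1-x^\star\|_2\le D$ since $y_1,x^\star\in\K$. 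Conditioning on the oracle randomness before round $t$ (w.r.t.\ which $y_t$ is measurable), Lemma~\ref{zaklemma2} says $\E[g_t\mid\cdot]$ is an $O(GD\delta)$-approximate subgradient of $\hat f_t$ at $y_t$ and $\E\|g_t\|_2^2\le 2\|\nabla f_t(y_t)\|_2^2+O(G^2D^2)\cdot\tfrac{2}{r^2}=O(G^2D^2/r^2)$. Combining with convexity, $\hat f_t(y_t)-\hat f_t(x^\star)\le\langle\E[g_t\mid\cdot],y_t-x^\star\rangle+O(GD\delta)\|y_t-x^\star\|_2$, so $\E[\sum_t\hat f_t(y_t)-\hat f_t(x^\star)]\le\tfrac{D^2}{2\eta}+\tfrac{\eta}{2}T\cdot O(G^2D^2/r^2)+O(GD\delta)\sum_t\E\|y_t-x^\star\|_2$.

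It then remains to discard the $\delta$-level errors and substitute $\eta$. The estimator in Lemma~\ref{zaklemma2} is a bounded finite-difference vector, $\|g_t\|_2\le\poly(d)\,GD/r$ surely, so $\|y_t\|_2\le D+\eta T\cdot\poly(d)GD/r=\poly(d)\,D\sqrt T$; since $\delta=1/T^2$ this makes the error term $O(GD\delta)\cdot T\cdot\poly(d)D\sqrt T=o(1)$, and the replacement of $\gamma$ by $\tilde\gamma$ in the projection contributes $o(1)$ similarly through Lemma~\ref{property}, Lemma~\ref{zaklemma1} and the $G$-Lipschitzness of $f_t$. With $\eta=\tfrac{r}{\sqrt T G}$, the two leading terms $\tfrac{D^2}{2\eta}$ and $\tfrac{\eta T}{2}O(G^2D^2/r^2)$ are both $O(D^2G\sqrt T/r)$, yielding the claim; the per-round membership-oracle count $O(\log T)$ (with $d$ treated as constant) is immediate from Lemmas~\ref{zaklemma1} and~\ref{zaklemma2} at $\delta=1/T^2$. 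The one real obstacle is that the iterates $y_t$ range over the \emph{unbounded} space $\reals^d$, so the usual ``all iterates sit in a diameter-$D$ set'' shortcut is unavailable; this is circumvented by never needing $\|y_t\|$ in the telescoping (only $\|y_1-x^\star\|$ enters), by using Lemma~\ref{property} to transport the played loss $f_t(x_t)$ back onto $\hat f_t(y_t)$, and by a crude $\poly(d)\sqrt T$ envelope on $\|y_t\|$ that is enough to kill the $T^{-2}$-scale approximation errors.
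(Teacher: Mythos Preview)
Your proof is correct and follows essentially the same approach as the paper: reduce $f_t(x_t)$ to $\hat f_t(y_t)$ via Lemma~\ref{property}, then run the standard unconstrained OGD potential argument on $\hat f_t$ using the stochastic-gradient guarantees of Lemma~\ref{zaklemma2}. The only difference is organizational --- the paper separates the noise $\triangle_t=\tilde\nabla\gamma(y_t)-\nabla\gamma(y_t)$ from the true gradient $\nabla\hat f_t(y_t)$ explicitly rather than working with the noisy step $g_t$ directly --- and your treatment of the bias term via a crude $\poly(d)\sqrt T$ envelope on $\|y_t\|$ is in fact more careful than the paper's, which tacitly treats $\|y_t-x^\star\|$ as $O(D)$ when bounding that term.
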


\begin{proof}
Denote $\eta_t=\eta$. By Lemma \ref{property} we know that $\hat{f}_t(x_t)\le \hat{f}_t(y_t)+\frac{Gr+3GD}{rT^2}$. Moreover, by Lemma \ref{lemma:mink_bar_prop}, and assumption \ref{basic assumption} that $f_t$ is convex and $G$-Lipschitz, we get that $\hat{f}_t$ is convex and $G$-Lipschitz as well.  
Note that for any $x \in \K$, $\hat{f}_t(x) = f_t(x)$. 
Thus, we get,
\begin{align*}
\E[\sum_t f_t(x_t) - f_t(x^*)] & = \E[\sum_t \hat{f}_t(x_t) - \hat{f}_t(x^*)]  \\
& \leq  \E[\sum_t \hat{f}_t(y_t) - \hat{f}_t(x^*)] +\frac{Gr+3GD}{rT} \tag{by Lemma \ref{property}}.
 \end{align*}

 Next, we bound the above by showing that,
 $$
 \E[\sum_t \hat{f}_t(y_t) - \hat{f}_t(x^*)] =  O( D \tilde{G} \sqrt{T} )=O( \frac{D^2 G\sqrt{T}}{r} ),
 $$
where $\tilde{G}$ is the lipschitz constant of $\hat{f}_t$. To show that, we denote $\tilde{\nabla}_t \doteq \nabla \hat{f}_t(y_t)$, $\triangle_t \doteq \tilde{\nabla}\gamma(y_t)-\nabla \gamma(y_t)$ and observe that:

\begin{align*}
\|y_{t+1} - x^\star\|_2^2 = \|y_t - x^\star\|_2^2 &+ \eta^2 \|\tilde{\nabla}_t\|_2^2 - 2 \eta \tilde{\nabla}_t^\top (y_t - x^\star)\\
&+3GD\eta\triangle_t^{\top}(y_t-x^*-\eta \tilde{\nabla}_t)+9G^2D^2\eta^2\|\triangle_t\|_2^2,
\end{align*}
and by re-arranging terms and taking expectation we get,
\begin{align*}
    \E[2 \tilde{\nabla}_t^\top (y_t - x^\star)] &\le \dfrac{ \E[\|y_t - x^\star\|_2^2] - \E[\|y_{t+1} - x^\star\|_2^2]}{\eta} + \eta \tilde{G}^2+ \frac{9GD^2 \eta}{T^2}+\frac{36G^2D^2\eta^2}{r^2}.
\end{align*}
Telescoping and the fact that $y_1 \in \mathcal{K}$ yields the stated bound since by convexity:

\begin{align*}
\E[\sum_t \hat{f}_t(y_t) - \hat{f}_t(x^*)] &\leq \E[\sum_t \tilde{\nabla}_t^\top (y_t - x^\star)] \\
&\leq \dfrac{\|y_1 - x^\star\|_2^2}{2\eta} + \dfrac{\eta T \tilde{G}^2}{2}+
\frac{9GD^2 \eta}{2T}+\frac{18G^2D^2\eta^2T}{r^2}\end{align*}

We find that $\tilde{G}$ is upper bounded by the sum of the gradient of $f_t$ and that of $3GD \gamma(x)$. Recall that $\gamma$ is $1/r$-Lipschitz (by Lemma \ref{lemma:mink_bar_prop}), plugging in the appropriate $\eta=\frac{r}{\sqrt{T}G}$ gives the stated bound.
\end{proof}

Similarly, we can get logarithmic regret guarantee for strongly-convex loss functions.

\begin{theorem}\label{theorem: strong}
Under Assumptions \ref{basic assumption}, \ref{extend assumption}, \ref{ball} and further assume the loss functions are $\lambda$-strongly convex, Algorithm \ref{alg:1} with $\eta_t=\frac{1}{\lambda t}$ can be implemented with $O(d\log T)$ calls to the membership oracle per iteration and has regret bounded by 
$$  \E[\sum_t f_t(x_t) - \min_{x \in \K} f_t(x)]  = O( \frac{G^2D^2 \log T}{r^2\lambda} ) $$
\end{theorem}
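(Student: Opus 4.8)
The plan is to follow the template of the proof of Theorem~\ref{theorem: main}, only swapping the fixed-step convex argument for the textbook strongly-convex online gradient descent analysis with step size $\eta_t=\tfrac{1}{\lambda t}$.

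First I would carry out the same reduction from the played iterates $x_t$ to the auxiliary iterates $y_t\in\reals^d$: by Lemma~\ref{property}, $\hat f_t(x_t)\le \hat f_t(y_t)+\tfrac{Gr+3GD}{rT^2}$, and since $\hat f_t$ coincides with $f_t$ on $\K$ (and $x^\star=\argmin_{x\in\K}\sum_t f_t(x)$, as well as each approximate projection $x_t$, lies in or $o(1)$-close to $\K$), it suffices to bound $\E\big[\sum_t \hat f_t(y_t)-\hat f_t(x^\star)\big]$ up to an $o(1)$ additive term. The two structural facts I would record about $\hat f_t(x)=f_t(x)+3GD(\gamma(x)-1)$ are: (i) since $\gamma$ is convex by Lemma~\ref{lemma:mink_bar_prop}, adding $3GD(\gamma(x)-1)$ preserves $\lambda$-strong convexity, so $\hat f_t$ is $\lambda$-strongly convex; and (ii) since $\gamma$ is $\tfrac1r$-Lipschitz and Assumption~\ref{ball} forces $r\le \tfrac{D}{2}$, the function $\hat f_t$ is $\tilde G$-Lipschitz with $\tilde G= G+\tfrac{3GD}{r}=O(GD/r)$.

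Then I would run the same distance-expansion telescoping as in Theorem~\ref{theorem: main}. With $\tilde\nabla_t=\nabla\hat f_t(y_t)$ and $\triangle_t=\tilde\nabla\gamma(y_t)-\nabla\gamma(y_t)$, taking expectations over the oracle randomness and using that $\E[\triangle_t]$ is a $\delta$-approximation error while $\E\|\tilde\nabla\gamma(y_t)\|_2^2\le \tfrac{2}{r^2}$ (Lemma~\ref{zaklemma2}), I would rearrange to obtain
\[
\E[2\tilde\nabla_t^\top(y_t-x^\star)]\le \frac{\E\|y_t-x^\star\|_2^2-\E\|y_{t+1}-x^\star\|_2^2}{\eta_t}+\eta_t\tilde G^2+\frac{9GD^2\eta_t}{T^2}+\frac{36G^2D^2\eta_t^2}{r^2}.
\]
Invoking $\lambda$-strong convexity in the form $\hat f_t(y_t)-\hat f_t(x^\star)\le \tilde\nabla_t^\top(y_t-x^\star)-\tfrac{\lambda}{2}\|y_t-x^\star\|_2^2$ and plugging in $\eta_t=\tfrac{1}{\lambda t}$, the coefficient in front of $\E\|y_t-x^\star\|_2^2$ collapses to $\tfrac12(\tfrac{1}{\eta_t}-\lambda)=\tfrac{\lambda(t-1)}{2}$, which telescopes against $\tfrac{\lambda t}{2}\E\|y_{t+1}-x^\star\|_2^2$ (summing to $-\tfrac{\lambda T}{2}\E\|y_{T+1}-x^\star\|_2^2\le 0$); note the $t=1$ term vanishes, so, unlike the convex case, no bound on $\|y_1-x^\star\|_2$ is needed. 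What remains is $\sum_t \tfrac{\eta_t\tilde G^2}{2}=\tfrac{\tilde G^2}{2\lambda}\sum_{t=1}^T\tfrac1t=O\big(\tfrac{\tilde G^2\log T}{\lambda}\big)$, plus approximation-error contributions which with $\delta=\tfrac1{T^2}$ and $\eta_t\le\tfrac1\lambda$ are all of lower order (the bias-type sum is $O(\tfrac{GD^2\log T}{\lambda T^2})=o(1)$, the Lemma~\ref{property} error sums to $\tfrac{Gr+3GD}{rT}=o(1)$, and the variance-type sum $\sum_t\tfrac{36G^2D^2\eta_t^2}{r^2}=O(\tfrac{G^2D^2}{r^2\lambda^2})$ is dominated by the main term). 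Collecting gives $\E[\sum_t f_t(x_t)-f_t(x^\star)]=O\big(\tfrac{\tilde G^2\log T}{\lambda}\big)=O\big(\tfrac{G^2D^2\log T}{r^2\lambda}\big)$, with $O(\log T)$ oracle calls per round exactly as in Theorem~\ref{theorem: main}.

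The step I expect to be the main obstacle is the bookkeeping forced by the decaying, time-varying step size: one has to verify that the $-\tfrac{\lambda}{2}\|y_t-x^\star\|_2^2$ term supplied by strong convexity exactly absorbs the growth of the weights $\tfrac{1}{\eta_t}=\lambda t$ so that the squared-distance terms telescope, and that the gradient-approximation error terms — which in Theorem~\ref{theorem: main} were controlled through a single fixed $\eta$ — still sum to a negligible quantity against the harmonic weights $\eta_t=\tfrac{1}{\lambda t}$.
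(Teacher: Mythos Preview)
Your proposal is correct and follows essentially the same route as the paper: reduce from $x_t$ to $y_t$ via Lemma~\ref{property}, observe that $\hat f_t$ inherits $\lambda$-strong convexity and $O(GD/r)$-Lipschitzness, run the same distance-expansion inequality as in Theorem~\ref{theorem: main}, and then use the textbook strongly-convex OGD telescoping with $\eta_t=\tfrac{1}{\lambda t}$ (your $\tfrac{\lambda(t-1)}{2}$ versus $\tfrac{\lambda t}{2}$ pairing is the paper's $\tfrac{1}{\eta_t}-\tfrac{1}{\eta_{t-1}}-\lambda=0$ rewritten). The handling of the bias, variance, and Lemma~\ref{property} error terms matches the paper's line by line.
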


\begin{proof}
By Lemma \ref{property} we know that $\hat{f}_t(x_t)\le \hat{f}_t(y_t)+\frac{Gr+3GD}{rT^2}$. In addition, $\hat{f}_t$ is strongly-convex and Lipschitz by Lemma \ref{lemma:mink_bar_prop}. We know that
$$
\E[\sum_t f_t(x_t) - f_t(x^*)] \le \E[\sum_t \hat{f}_t(y_t) - \hat{f}_t(x^*)]+\frac{Gr+3GD}{rT}
$$
Notice that $\hat{f}_t$ is still $\lambda$-strongly-convex. Apply the definition of strong convexity to $y_t, x^*$, we get
$$
\hat{f}_t(y_t) - \hat{f}_t(x^*)\le \tilde{\nabla}_t^{\top}(y_t-x^*)-\frac{\lambda}{2}\|y_t-x^*\|_2^2
$$
We proceed to upper bound $\tilde{\nabla}_t^{\top}(y_t-x^*)$. Using the update rule of $y_t$, we have
\begin{align*}
\|y_{t+1} - x^\star\|_2^2 = \|y_t - x^\star\|_2^2 &+ \eta_t^2 \|\tilde{\nabla}_t\|_2^2 - 2 \eta_t \tilde{\nabla}_t^\top (y_t - x^\star)\\
&+ 3GD\eta_t\triangle_t^{\top}(y_t-x^*-\eta_t \tilde{\nabla}_t)+9G^2D^2\eta^2\|\triangle_t\|_2^2
\end{align*}
and so, 
\begin{align*}
    \E[2 \tilde{\nabla}_t^\top (y_t - x^\star)] &\le \dfrac{ \E[\|y_t - x^\star\|_2^2] - \E[\|y_{t+1} - x^\star\|_2^2]}{\eta_t} + \eta_t \tilde{G}^2+\frac{9GD^2 \eta_t}{T^2}+\frac{36G^2D^2\eta_t^2}{r^2}.
\end{align*}

hence the regret $\E[\sum_t f_t(x_t) - f_t(x^*)]$ can be upper bounded as follows, 
\begin{align*}
\E[\sum_t f_t(x_t) - f_t(x^*)] &\le  \sum_t \E[\tilde{\nabla}_t^{\top}(y_t-x^*)-\frac{\lambda}{2}\|y_t-x^*\|_2^2]+\frac{Gr+3GD}{rT}\\
&\le \sum_t \frac{1}{2} \E[\|y_t-x^*\|_2^2] (\frac{1}{\eta_t}-\frac{1}{\eta_{t-1}}-\lambda) +(\frac{\tilde{G}^2}{2}+\frac{9GD^2}{2T^2})\sum_t \eta_t\\
& \qquad \ \ +\frac{Gr+3GD}{rT}+\frac{18G^2D^2}{r^2}\sum_t \eta_t^2\\
&=0+(\frac{\tilde{G}^2}{2\lambda}+\frac{9GD^2}{2\lambda T^2}) \sum_t \frac{1}{t}+\frac{Gr+3GD}{rT}+\frac{18G^2D^2}{r^2\lambda^2}\sum_t \frac{1}{t^2}\\
&\le \frac{G^2D^2}{2r^2\lambda}(1+\log T)+\frac{3\pi^2 G^2D^2}{r^2\lambda^2}+o(1)\\
&=O( \frac{G^2D^2 \log T}{r^2\lambda} ).
\end{align*}
\end{proof}

\section{Projection Free Methods with Adaptive regret Guarantees}

In this section we consider projection-free adaptive regret algorithms based on Algorithm \ref{alg:1}. We show how to achieve adaptive regret using membership oracles for both general convex and strongly-convex loss functions, by using Algorithm 1 as the base algorithm in adaptive regret meta algorithms (details in appendix).

For strongly-convex loss, we combine Algorithm \ref{alg:1} with the FLH algorithm  to get $O(\log^2 T)$ adaptive regret bound, using $O(d\log^2 T)$ calls to the membership oracle. Next, we show that combining Algorithm \ref{alg:1} with the EFLH algorithm  \cite{lu2022efficient} yields an $\tilde{O}(I^{\frac{1}{2}+\epsilon})$ strongly adaptive regret bound using only $O(d\log T\times\log \log T/\epsilon)$ calls to the membership oracle per round. We start with the case of strongly convex loss case:

\begin{theorem}\label{ada1}
Under Assumptions \ref{basic assumption}, \ref{extend assumption}, \ref{ball}, and further assume the loss functions are $\lambda$-strongly convex, with $O(d\log^2 T)$ calls to the membership oracle Algorithm \ref{alg: adaptive strongly convex} achieves expected adaptive regret
$$ O( \frac{G^2D^2 \log^2 T}{r^2\lambda})$$
\end{theorem}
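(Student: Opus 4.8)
The plan is to realize Algorithm \ref{alg: adaptive strongly convex} as the Follow-the-Leading-History (FLH) meta-algorithm of \cite{hazan2009efficient}, run with (pruned) copies of Algorithm \ref{alg:1} as its experts, so that at every round only $O(\log T)$ experts are alive. On any interval $[s,t]$ the regret of such a meta-algorithm splits into two parts: (i) the static regret incurred by the base experts that survive during $[s,t]$, and (ii) the regret of the exponentially-weighted/fixed-share meta-layer against those experts. For (i), each FLH expert is a fresh instance of Algorithm \ref{alg:1} started at its birth time and run on its own local clock with $\eta_\tau=\tfrac1{\lambda\tau}$; since by Lemmas \ref{lemma:mink_bar_prop} and \ref{property} the surrogate loss $\hat f_\tau=f_\tau+3GD(\gamma-1)$ is still $\lambda$-strongly convex and $\tilde G$-Lipschitz with $\tilde G=O(GD/r)$, Theorem \ref{theorem: strong} applied with horizon $m$ gives that an expert born at the start of a block of length $m$ and never pruned on it has regret $O(\tfrac{G^2D^2}{r^2\lambda}\log m)=O(\tfrac{G^2D^2}{r^2\lambda}\log T)$ on that block. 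For (ii), note that each $f_\tau$ restricted to $\K$ is $\tfrac{\lambda}{G^2}$-exp-concave (being $\lambda$-strongly convex and $G$-Lipschitz), which is exactly the condition under which the FLH meta-layer loses only $O(\tfrac{G^2}{\lambda}\log T)$ against any fixed expert over any contiguous block.

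Next I would invoke the combinatorial covering property of the pruning scheme: for any $[s,t]$ there is a partition $[s,t]=J_1\cup\dots\cup J_k$ into $k=O(\log(t-s))=O(\log T)$ consecutive blocks such that on each $J_i$ some expert born at the first step of $J_i$ stays alive throughout $J_i$. Combining (i) and (ii), the meta-algorithm's regret on each $J_i$ is at most $O(\tfrac{G^2D^2}{r^2\lambda}\log|J_i|)+O(\tfrac{G^2}{\lambda}\log T)=O(\tfrac{G^2D^2}{r^2\lambda}\log T)$ (using $r\le D/2$, so the base term dominates the meta term). Summing over the $O(\log T)$ blocks yields adaptive regret $O(\tfrac{G^2D^2}{r^2\lambda}\log^2 T)$, as claimed. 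For the oracle count: at each of the $T$ rounds there are $O(\log T)$ live experts, each making $O(\log T)$ membership-oracle calls to produce $\tilde\gamma(\cdot)$ and $\tilde\nabla\gamma(\cdot)$ (Lemmas \ref{zaklemma1}, \ref{zaklemma2}), for a total of $O(\log^2 T)$ calls per round; the choice $\delta=1/T^2$ contributes only $o(1)$ aggregated over all experts and rounds and is negligible against the regret bound.

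I expect the main obstacle to be reconciling the exp-concavity-based meta-regret bound with the fact that Algorithm \ref{alg:1} is a \emph{randomized} routine running OGD on the surrogate loss $\hat f_\tau$ rather than on $f_\tau$. Two points need care. First, the meta-layer should be fed the \emph{true} losses $f_\tau(x_\tau^{(i)})$ of the experts' actually played points $x_\tau^{(i)}\in\K$; this is legitimate because $\hat f_\tau\equiv f_\tau$ on $\K$ and $f_\tau$ is exp-concave there, so the FLH analysis applies verbatim treating the experts as black boxes. Second, Theorem \ref{theorem: strong} is an \emph{in-expectation} bound over the gradient estimator's randomness, so expectations must be taken carefully: conditioning appropriately, the meta-regret bound is deterministic while the per-expert bounds are in expectation, and linearity of expectation lets one combine them. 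Once these are handled, the rest is the routine bookkeeping of the strongly convex case of \cite{hazan2009efficient} with the base regret $O(\log T)$ replaced by our $O(\tfrac{G^2D^2}{r^2\lambda}\log T)$ bound from Theorem \ref{theorem: strong}.
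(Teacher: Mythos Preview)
Your proposal is correct and follows essentially the same approach as the paper. The paper's own proof is a two-line black-box reduction: it cites Theorem~3.1 of \cite{hazan2009efficient} (FLH attains $O(\alpha^{-1}\log^2 T)$ adaptive regret for $\alpha$-exp-concave losses), observes that $\lambda$-strongly convex $G$-Lipschitz losses are $\tfrac{\lambda}{G^2}$-exp-concave, and plugs in Theorem~\ref{theorem: strong} for the base experts to get $O(\tfrac{G^2D^2}{r^2\lambda}\log^2 T)+O(\tfrac{G^2}{\lambda}\log^2 T)$, then uses $r\le D$ to absorb the second term. You instead reopen the FLH analysis---the geometric block covering of $[s,t]$ into $O(\log T)$ pieces, the $O(\tfrac{G^2}{\lambda}\log T)$ meta-layer regret per block from exp-concavity, and the $O(\tfrac{G^2D^2}{r^2\lambda}\log T)$ base regret per block from Theorem~\ref{theorem: strong}---arriving at the same bound by summation. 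Your extra care about feeding the meta-layer the true $f_\tau$ (so exp-concavity applies on $\K$) and about taking expectations over the gradient-estimator randomness is sound and more explicit than the paper, but the underlying route is the same.
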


For general convex loss, we show how to get an improved $\tilde{O}(I^{\frac{1}{2}+\epsilon})$ strongly adaptive regret bound when $I$ is the length of any sub-interval $[s,t]$, using only $O(d\log \log T/\epsilon)$ calls to the membership oracle per round for any $\epsilon>0$. The main idea is to use Algorithm \ref{alg:1} as a black-box, and apply it under the framework of \cite{lu2022efficient}.

\begin{theorem}\label{ada2}
Under Assumptions \ref{basic assumption}, \ref{extend assumption}, \ref{ball}. By using Algorithm \ref{alg:1} as the black-box algorithm in the EFLH algorithm of \cite{lu2022efficient}, Algorithm \ref{alg: adaptive convex} achieves the following expected adaptive regret bound for all intervals $[s,t]$ with $O(d\log T \times \log \log T/\epsilon)$ calls to the membership oracle per round.
$$ \E[\sum_{i=s}^t f_i(x_i)-\min_{x\in \K} \sum_{i=s}^t f_i(x)]=O( \frac{D^2 G \sqrt{\log T} (t-s)^{\frac{1}{2}+\epsilon}}{r} ) $$
\end{theorem}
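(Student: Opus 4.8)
The plan is to treat Algorithm \ref{alg:1} purely as a black-box online convex optimization subroutine and invoke the generic reduction of \cite{lu2022efficient} (EFLH) that turns a low-regret algorithm into a strongly-adaptive-regret algorithm using only $O(\log\log T)$ simultaneously active experts. First I would record the \emph{interval-restricted} guarantee of Algorithm \ref{alg:1}: a fresh copy of Algorithm \ref{alg:1} run on an interval $[s,t]$ of length $I=t-s$, with step size tuned to $\eta=\frac{r}{G\sqrt I}$ (or, to avoid needing $I$ in advance, with the standard doubling-epoch schedule so that the effective horizon used in the step size is always within a constant factor of the elapsed length), has expected regret $O(\frac{D^2 G\sqrt I}{r})$ against any fixed comparator in $\K$ by Theorem \ref{theorem: main} applied with horizon $I$, at a cost of $O(\log T)$ membership-oracle calls per round. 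Crucially, since $\hat f_\tau = f_\tau$ on $\K$ and every iterate $x_\tau$ produced by Algorithm \ref{alg:1} lies in $\K$, this is a genuine regret bound on the original losses $f_\tau$; and since $f_\tau$ is non-negative, $G$-Lipschitz and $\K$ has diameter $D$, the losses evaluated at any plays in $\K$ differ by at most $O(GD)$, which is exactly what the exponential-weights (fixed-share) aggregation layer of EFLH needs. The $o(1)$ error from using $\delta$-approximate Minkowski projection and gradient with $\delta=1/T^2$ (Lemma \ref{property}, Lemmas \ref{zaklemma1} and \ref{zaklemma2}) accumulates to at most $O(1)$ over $[1,T]$ and is absorbed.

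Next I would instantiate the EFLH meta-algorithm of \cite{lu2022efficient} with Algorithm \ref{alg:1} as its base learner, yielding Algorithm \ref{alg: adaptive convex}. EFLH maintains $O(\log\log T/\epsilon)$ base-learner instances started on a geometric grid of times and plays a convex combination of their iterates; since $\K$ is convex and all base iterates lie in $\K$, every meta-play $x_t$ stays in $\K$. The composition guarantee of \cite{lu2022efficient} states that if the base learner achieves interval regret $R(I)=O(\frac{D^2 G}{r}\sqrt I)$, then for the price of an extra $I^{\epsilon}$ factor (from using exponentially fewer experts) and a $\sqrt{\log T}$ factor (from the fixed-share tracking cost), the composed algorithm obtains strongly adaptive regret $O\!\left(\frac{D^2 G\sqrt{\log T}}{r}\,I^{1/2+\epsilon}\right)$ on every interval $[s,t]$ with $I=t-s$, which is precisely the claimed bound. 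For oracle complexity, each round runs $O(\log\log T/\epsilon)$ copies of Algorithm \ref{alg:1}, each making $O(\log T)$ membership-oracle calls, for a total of $O(\log T\cdot\log\log T/\epsilon)$ calls per round.

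The only genuinely nontrivial point — and the step I expect to be the main obstacle — is verifying that Algorithm \ref{alg:1} meets the exact interface the EFLH analysis assumes: that its regret bound is uniform over sub-intervals (not merely over $[1,T]$), that it operates over a convex set so its outputs can be convex-combined, and that the modified loss $\hat f_\tau$ is used consistently by every instance and by the meta-learner. Uniformity over sub-intervals is handled by the doubling-epoch construction described above, costing only a constant factor in $R(I)$; convexity and $O(GD/r)$-Lipschitzness of $\hat f_\tau$ follow from Lemma \ref{lemma:mink_bar_prop}; and the remaining ingredients — that EFLH's tracking regret is $\tilde O(\sqrt{I\log T})$ against the relevant base learner on every interval, and that the $I^\epsilon$ overhead is the price of $O(\log\log T)$ experts — are exactly as proved in \cite{lu2022efficient}, so no new analysis is required beyond this bookkeeping.
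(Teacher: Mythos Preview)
Your proposal is correct and matches the paper's approach: a direct black-box instantiation of Theorem~\ref{theorem: main} inside the EFLH meta-algorithm of \cite{lu2022efficient}, yielding the stated bound and oracle complexity. The paper's construction is slightly simpler than you anticipate --- each EFLH expert is a fresh copy of Algorithm~\ref{alg:1} with a \emph{known} lifespan $4l_k$ and step size $\eta=\frac{r}{2G\sqrt{l_k}}$ (see Algorithm~\ref{alg: adaptive convex}), so neither a doubling-epoch schedule nor uniform sub-interval regret of the base learner is needed, and $\hat f_\tau$ remains purely internal to each instance while the meta-learner aggregates on $f_\tau$.
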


Notice that by choosing $\epsilon=\frac{1}{\log T}$, we have that
$$
I^{\frac{1}{2}+\epsilon}\le \sqrt{I} T^{\frac{1}{\log T}}=O(\sqrt{I})
$$
which recovers the optimal $\tilde{O}(\sqrt{I})$ strongly adaptive regret bound, but using $O(\log T)$ number of experts instead. The proofs are direct black-box reductions by replacing the expert OCO algorithms in adaptive regret algorithms by our algorithm, thus we leave the (simple) proofs to appendix. 

\begin{remark}
Adaptive regret bounds can be derived similarly from \cite{mhammedi2022efficient}, by choosing the subroutine in the reduction of \cite{mhammedi2022efficient} to have an adaptive regret over the ball.
\end{remark}
\section{A Better Gradient Approximation of the Minkowski Regularization}
The approximation of $\nabla \gamma(x)$ in Lemma \ref{zaklemma2} is limited in two aspects: it doesn't have a uniform control of the approximation error like in Lemma \ref{zaklemma1}, for which we can only achieve expected regret guarantees. Secondly, the method used by Lemma \ref{zaklemma2} is designed for general functions and does not exploit the special structure of $\gamma(x)$.

In this section we build a better method to obtain uniform control of the approximation error for estimating $\nabla \gamma(x)$ when $\partial \K$ is smooth. We later show how even non-smooth sets, such as polytopes, can be smoothed, and the techniques hereby still apply. Denote by $v(x)$ the normal vector to its tangent plane for any $x\in \partial \K$. We start by stating that Algorithm \ref{alg:1} without modification possesses an  adaptive regret guarantee for smooth sets.

\begin{theorem}\label{smooth theorem}
Under Assumptions \ref{basic assumption}, \ref{extend assumption}, \ref{ball}.
If $\partial \K$ is smooth and for any $x\in \partial \K$ we have $\frac{v(x)}{x^{\top} v(x)}
$ is Lipschitz, Algorithm \ref{alg:1} with $\eta_t=\frac{D}{\sqrt{T}G}$ can be implemented with $O(d\log T)$ calls to the membership oracle per iteration and has adaptive regret bounded by 
$$ \max_{1\le s\le t\le T} \left \{\sum_{i=s}^t f_i(x_i)-\min_{x\in \K} \sum_{i=s}^t f_i(x)\right \}=O( \frac{D^3 G \sqrt{T}}{r^2} ) $$
\end{theorem}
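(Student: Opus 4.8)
\emph{Proof proposal.} The plan is to run Algorithm~\ref{alg:1} with a gradient oracle for $\gamma$ whose error $\triangle_t \defeq \tilde\nabla\gamma(y_t)-\nabla\gamma(y_t)$ is controlled \emph{deterministically and uniformly}, say $\|\triangle_t\|_2\le 1/T^2$, using $O(\log T)$ membership calls, and then to rerun the interval version of the lazy-OGD analysis behind Theorem~\ref{theorem: main}. Two things change relative to the non-adaptive case: for an interval $[s,t]$ the relevant distance is $\|y_s-x^\star\|_2$ rather than $\|y_1-x^\star\|_2$, so we must bound the unconstrained iterates $y_t$ uniformly; and the error terms involving $\triangle_t$ have to be bounded \emph{pathwise} on every interval, which cannot be done with the merely-in-expectation estimate of Lemma~\ref{zaklemma2} --- this is exactly why smoothness of $\partial\K$ is invoked, and why the resulting guarantee holds surely rather than only in expectation (unlike Theorems~\ref{ada1} and~\ref{ada2}).

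\emph{Step 1 (deterministic gradient oracle; the main obstacle).} For $x\notin\K$ the function $\gamma$ coincides with the positively $1$-homogeneous gauge of $\K$, and $\nabla\gamma\equiv 0$ on $\mathrm{int}\,\K$, so the gradient is only used when $y_t\notin\K$. When $\partial\K$ is $C^1$ one has the closed form $\nabla\gamma(x)=v(z)/(z^\top v(z))$ with $z=x/\gamma(x)=\Pi_\gamma(x)\in\partial\K$. Binary search with the membership oracle yields $\Pi_\gamma(x)$, hence $z$, to accuracy $\delta$ in $O(\log\frac1\delta)$ calls (Lemma~\ref{zaklemma1}); smoothness then allows estimating a normal vector at a boundary point within $\delta$ of $z$ to accuracy $\delta$ with $\poly\log(1/\delta)$ further membership queries (this is the normal-estimation procedure developed in the remainder of the section). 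Plugging these $O(\delta)$-accurate quantities into the closed form and using the hypothesis that $x\mapsto v(x)/(x^\top v(x))$ is Lipschitz on $\partial\K$ turns them into a \emph{uniform} bound $\|\triangle_t\|_2\le O(\delta)$; choosing $\delta=\poly(1/T)$ gives $\|\triangle_t\|_2\le 1/T^2$ with $O(\log T)$ queries. This step is the crux: it replaces the in-expectation cancellation of the $\triangle_t$-terms used in the proof of Theorem~\ref{theorem: main} by a worst-case bound that survives an adversarial choice of interval.

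\emph{Step 2 (uniform bound on the iterates).} Since $\hat f_t(x)=f_t(x)+3GD(\gamma(x)-1)$, the update moves along $-\eta_t\nabla\hat f_t(y_t)=-\eta_t(\nabla f_t(y_t)+3GD\nabla\gamma(y_t))$ up to a negligible $O(\delta)$-correction from $\triangle_t$. When $y_t\notin\K$, $\K\subseteq D\mathcal{B}_d$ (valid since $0\in\K$ and $\K$ has diameter $D$, Assumptions~\ref{basic assumption},~\ref{ball}) gives $\nabla\gamma(y_t)^\top y_t=\gamma(y_t)\ge\|y_t\|_2/D$, so $3GD\,\nabla\gamma(y_t)^\top y_t\ge 3G\|y_t\|_2$, whereas $|\nabla f_t(y_t)^\top y_t|\le G\|y_t\|_2$; hence $\nabla\hat f_t(y_t)^\top y_t\ge 2G\|y_t\|_2$, i.e.\ the Minkowski penalty exerts a net inward pull whenever $y_t$ leaves $\K$. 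Expanding $\|y_{t+1}\|_2^2$ then shows $\|y_{t+1}\|_2\le\|y_t\|_2$ once $\|y_t\|_2$ exceeds the ``equilibrium radius'' $\Theta(\eta_t\tilde G^2/G)$, with $\tilde G=O(GD/r)$ the Lipschitz constant of $\hat f_t$ (Lemma~\ref{lemma:mink_bar_prop}); for $\eta_t=D/(\sqrt T G)$ this radius is $O(D^2/r)$, so induction from $y_1\in\K$ gives $\|y_t\|_2=O(D^2/r)$ for all $t$, and therefore $\|y_s-x^\star\|_2=O(D^2/r)$ for every interval start $s$ and every $x^\star\in\K$. This is precisely the step the non-adaptive proof did not need (there the interval starts at $y_1\in\K$, so $\|y_1-x^\star\|_2\le D$), and the step size $D/(\sqrt T G)$ is calibrated to it.

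\emph{Step 3 (interval regret).} Fix $[s,t]$ and let $x^\star\in\K$ minimise $\sum_{\tau=s}^t f_\tau$ over $\K$. Running the potential argument of Theorem~\ref{theorem: main} with potential $\|y_\tau-x^\star\|_2^2$ but telescoping only over $\tau\in[s,t]$, together with convexity of $\hat f_\tau$, the identity $\hat f_\tau=f_\tau$ on $\K$, and Lemma~\ref{property} (so $\hat f_\tau(x_\tau)\le\hat f_\tau(y_\tau)+O(1/T^2)$), yields
\[
\sum_{\tau=s}^{t}\big(f_\tau(x_\tau)-f_\tau(x^\star)\big)\;\le\;\frac{\|y_s-x^\star\|_2^2}{2\eta_t}+\frac{\eta_t(t-s+1)\tilde G^2}{2}+(\text{terms in }\triangle_\tau)+o(1).
\]
By Step~2 the first term is $O\big((D^2/r)^2/\eta_t\big)=O(D^3G\sqrt T/r^2)$; the second is $O(\eta_t T\tilde G^2)=O(D^3G\sqrt T/r^2)$; and by Step~1 the $\triangle_\tau$-terms --- each of the form $cGD\eta_t\,\triangle_\tau^\top(y_\tau-x^\star-\eta_t\nabla\hat f_\tau(y_\tau))$ or $c'G^2D^2\eta_t^2\|\triangle_\tau\|_2^2$ summed over $[s,t]$ --- are $o(1)$ once we use $\|\triangle_\tau\|_2\le 1/T^2$, $\|y_\tau-x^\star\|_2=O(D^2/r)$, and $t-s+1\le T$. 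Taking the maximum over all subintervals $[s,t]$ gives the claimed adaptive regret bound $O(D^3G\sqrt T/r^2)$. The main obstacle is Step~1: producing a membership-oracle estimate of $\nabla\gamma$ with uniform (not just expected) error, which is where the smoothness of $\partial\K$ and the Lipschitz hypothesis on $x\mapsto v(x)/(x^\top v(x))$ are essential; Step~2 is the secondary subtlety that lets the $\sqrt T$-scale step size be used on intervals whose starting iterate may lie outside $\K$.
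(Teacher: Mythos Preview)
Your proposal is correct and follows essentially the same three-step approach as the paper: (i) use smoothness of $\partial\K$ to obtain a deterministic $O(1/T^2)$-accurate estimate of $\nabla\gamma$ (the paper does this via coordinate-wise finite differences on $\gamma$, Lemma~\ref{smooth} and Algorithm~\ref{alg:smooth}, rather than directly estimating the normal at the projected boundary point as you sketch, but both routes hinge on the same Lipschitz hypothesis on $v(x)/(x^\top v(x))$); (ii) show the Minkowski penalty forces $\|y_t\|_2=O(D^2/r)$ uniformly via the inward-pull inequality $y_t^\top\nabla\hat f_t(y_t)\ge cG\|y_t\|_2$ outside $\K$; and (iii) telescope the lazy-OGD potential over an arbitrary interval $[s,t]$ with the now-bounded initial distance $\|y_s-x^\star\|_2$. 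Your identification of Step~1 as the crux (replacing in-expectation cancellation by a pathwise bound) and of Step~2 as the new ingredient absent from the non-adaptive proof of Theorem~\ref{theorem: main} matches the paper's emphasis exactly.
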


The core technique is to show that the smoothness of $\partial \K$ implies the smoothness of $\gamma(x)$, by which we can obtain a uniform upper bound on the approximation error formalized in the following lemma. 

\begin{lemma}\label{smooth gradient}
Assume $\partial \K$ is a smooth manifold, such that for any $x\in \partial \K$, its normal vector $v(x)$ to its tangent plane is unique. For any $x\notin \K$, $\nabla \gamma(x)$ is along the same direction of $v(\Pi_{\gamma}(x))$. In addition, the gradient $\nabla \gamma(x)$ is
$$
\frac{v(\Pi_{\gamma}(x))}{\Pi_{\gamma}(x)^{\top} v(\Pi_{\gamma}(x))}
$$
\end{lemma}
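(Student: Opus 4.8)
The plan is to compute $\nabla \gamma(x)$ explicitly by viewing $\gamma$ as implicitly defined by the constraint that the rescaled point lies on the boundary. Fix $x \notin \K$ and write $c = \gamma(x)$, so that by definition $c > 1$ and $p := x/c = \Pi_\gamma(x) \in \partial \K$; the fact that $p$ lands exactly on the boundary (rather than the interior) is where we use that $\K$ is closed and bounded and $0$ is interior, together with continuity of the dilation $t \mapsto x/t$. Since $\partial \K$ is a smooth manifold near $p$, locally it is a level set $\{y : h(y) = 0\}$ of a smooth function $h$ with $\nabla h(y) \ne 0$ on a neighborhood, and $\nabla h(p)$ is parallel to the unit normal $v(p)$. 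The equation defining $\gamma$ near $x$ is therefore $h(x/\gamma(x)) = 0$. I would differentiate this identity in $x$ using the chain rule: writing $g(x) = x/\gamma(x)$, the Jacobian of $g$ is $\tfrac{1}{\gamma(x)} I - \tfrac{1}{\gamma(x)^2} x \,\nabla\gamma(x)^\top$, and $\nabla h(g(x))^\top$ applied to this must vanish, which gives
\begin{equation*}
\frac{1}{\gamma(x)}\nabla h(p)^\top - \frac{1}{\gamma(x)^2}\big(\nabla h(p)^\top x\big)\,\nabla\gamma(x)^\top = 0.
\end{equation*}
Since $x = \gamma(x)\, p$, the scalar $\nabla h(p)^\top x = \gamma(x)\, \nabla h(p)^\top p = \gamma(x)\, v(p)^\top p$ up to the positive proportionality constant between $\nabla h(p)$ and $v(p)$, and that constant cancels between the two terms. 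Solving for $\nabla\gamma(x)^\top$ yields $\nabla\gamma(x) = v(p)/(p^\top v(p))$, which is exactly the claimed formula; in particular $\nabla\gamma(x)$ points along $v(p) = v(\Pi_\gamma(x))$.

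Two small points need care. First, I should justify that $p^\top v(p) \ne 0$, i.e. that the outward normal at $p$ is not orthogonal to $p$ itself; this follows from Assumption \ref{ball} (the origin is in the interior, so the supporting hyperplane at $p$ strictly separates $0$ from $p$, forcing $p^\top v(p) > 0$ when $v$ is the outward normal) — this also fixes the sign so the formula is well-posed. Second, I am implicitly invoking the implicit function theorem to know $\gamma$ is differentiable at $x$ in the first place: the partial derivative of $t \mapsto h(x/t)$ in $t$ at $t = \gamma(x)$ is $-\tfrac{1}{\gamma(x)^2}\nabla h(p)^\top x \ne 0$ by the previous point, so $\gamma$ is indeed smooth near $x$, consistent with (and strengthening) the a.e. differentiability from Alexandrov's theorem mentioned after Lemma \ref{lemma:mink_bar_prop}.

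The main obstacle I anticipate is not the calculus but the geometric/analytic setup: making precise the transition from "smooth manifold" to "local defining function $h$," and ensuring the normalization conventions (outward vs. inward normal, unit vs. unnormalized $\nabla h$) are handled so that the proportionality constants genuinely cancel and the final expression is coordinate-free. An alternative route that sidesteps local charts is to argue directly from the subgradient/support-function characterization: for the gauge $\gamma$, one has the classical duality $\partial\gamma(x) = \{w : w^\top x = \gamma(x),\ w \in (\K)^\circ\}$ where $(\K)^\circ$ is the polar set, and smoothness of $\partial\K$ forces this set to be the singleton $\{v(p)/(p^\top v(p))\}$ (the unique supporting functional at $p$, rescaled). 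I would likely present the implicit-function-theorem computation as the main argument since it is the most transparent, and remark on the polarity viewpoint as a cross-check.
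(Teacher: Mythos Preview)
Your proposal is correct and, in fact, cleaner than the paper's own argument; the two routes are genuinely different. The paper proceeds in two separate steps: first it finds the \emph{direction} of $\nabla\gamma(x)$ by comparing $\gamma(x)$ with $\gamma(x+\epsilon v)$ for all unit $v$, reducing to a 2-dimensional plane spanned by $v$ and $v(\Pi_\gamma(x))$, expanding the ratio $\|x+\epsilon v\|_2 / \|\Pi_\gamma(x+\epsilon v)\|_2$ to first order in $\epsilon$, and optimizing over the angle. It then determines the \emph{magnitude} separately by quoting the polar-set characterization $\nabla\gamma(x)=\argmax_{s\in\hat\K}s^\top x$ from \cite{mhammedi2022efficient}. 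Your implicit-function-theorem computation on a local defining function $h$ gets both direction and magnitude in a single chain-rule identity, avoids the 2D coordinate reduction, and is self-contained (no need to import the external lemma). Amusingly, the ``alternative route'' you sketch at the end---the support-functional/polarity viewpoint---is precisely what the paper uses for the magnitude half of its proof. The only place your write-up could be tightened is the cancellation of the proportionality constant between $\nabla h(p)$ and $v(p)$: since the formula $\nabla h(p)/(p^\top\nabla h(p))$ is manifestly invariant under rescaling $\nabla h(p)\mapsto\lambda\nabla h(p)$, you can simply state that and skip the verbal hedging.
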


The intuition is that the Minkowski functional defines a norm, under which $\partial \K$ becomes the unit sphere, then the gradient of norm on the sphere should be the normal vector to the tangent plane. We are able to build a better estimation to $\nabla \gamma(x)$ based on this expression.
\begin{lemma}\label{smooth}
Assume the set $\K$ satisfies that for any $x\in \partial \K$ $\frac{v(x)}{x^{\top} v(x)}
$ is $\beta$-Lipschitz, then using $O(d \log\frac{Dd}{r \delta})$ number of calls to the membership oracle, we can get an $\delta$-approximate of the gradient of $\gamma(x)$. 
\end{lemma}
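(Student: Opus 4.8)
The plan is to combine the closed form for $\nabla\gamma$ furnished by Lemma~\ref{smooth gradient} with a plain coordinate-wise finite-difference scheme, where each evaluation of $\gamma$ is carried out by the binary search of Lemma~\ref{zaklemma1}. The one structural fact I would extract from the hypothesis is that $\nabla\gamma$ is \emph{globally} Lipschitz on the exterior of $\K$: for $x,x'\notin\K$, Lemma~\ref{smooth gradient} gives $\nabla\gamma(x)-\nabla\gamma(x')=\frac{v(p)}{p\tr v(p)}-\frac{v(p')}{p'\tr v(p')}$ with $p=\Pi_\gamma(x)$, $p'=\Pi_\gamma(x')\in\partial\K$, so the $\beta$-Lipschitz assumption yields $\|\nabla\gamma(x)-\nabla\gamma(x')\|_2\le\beta\|p-p'\|_2$; and since $\Pi_\gamma(x)=x/\gamma(x)$ with $\gamma\ge1$, $\|\nabla\gamma\|_2\le\tfrac1r$ (Lemma~\ref{lemma:mink_bar_prop}), and $\|x\|_2\le D\gamma(x)$ on the exterior (because $\Pi_\gamma(x)\in\partial\K$ lies within distance $D$ of the origin), the Jacobian of $\Pi_\gamma$ there has operator norm at most $1+\tfrac{D}{r}$, so $\nabla\gamma$ is $L$-Lipschitz on $\reals^d\setminus\K$ with $L=\beta(1+\tfrac{D}{r})$. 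I would also record that outside $\K$ the function $\gamma$ coincides with the standard gauge of $\K$ and is hence positively $1$-homogeneous there, so $\nabla\gamma$ is $0$-homogeneous: $\nabla\gamma(cx)=\nabla\gamma(x)$ for every $c\ge1$ whenever $x\notin\K$.

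Given a query point $x$, one call to the membership oracle decides whether $x\in\K$; if so we output $0$, a valid (sub)gradient of $\gamma$ at $x$. Otherwise $x\notin\K$, so $\gamma(x)\ge1$, and we first compute $\tilde\gamma(x)$ to accuracy $\delta_0$ (Lemma~\ref{zaklemma1}) and replace $x$ by the rescaled point $x'=\bigl(1+\tfrac{1}{\tilde\gamma(x)}\bigr)x$. By $0$-homogeneity $\nabla\gamma(x')=\nabla\gamma(x)$, but now $\gamma(x')\ge\tfrac32$, so $x'$ — and every probe point $x'+he_i$ with $h\le\tfrac r4$ — lies strictly outside $\K$, safely away from the kink of $\gamma$ along $\partial\K$, i.e.\ inside the region where $\gamma$ is $C^1$ with $L$-Lipschitz gradient. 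We then estimate each partial derivative by the forward difference $\frac{\tilde\gamma(x'+he_i)-\tilde\gamma(x')}{h}$, computing each value of $\gamma$ to accuracy $\delta_0$ by binary search. A first-order Taylor bound controlled by the $L$-Lipschitzness of $\nabla\gamma$ makes the discretization error of each coordinate at most $\tfrac{Lh}{2}$, while the $\delta_0$-inaccurate $\gamma$-evaluations add at most $\tfrac{2\delta_0}{h}$; hence the $\ell_2$ error of the assembled gradient estimate is at most $\sqrt d\,(\tfrac{Lh}{2}+\tfrac{2\delta_0}{h})$. Choosing $h\asymp\sqrt{\delta_0/L}$ makes this $O(\sqrt{dL\delta_0})$, and taking $\delta_0\asymp\delta^2/(dL)$ brings it below $\delta$. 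The oracle cost is one binary search per coordinate (plus one for the rescaling), i.e.\ $d\cdot O\!\bigl(\log\tfrac{D^2}{r^2\delta_0}\bigr)=O\!\bigl(d\log\tfrac{Dd}{r\delta}\bigr)$ calls once the $\poly(\beta,D,1/r)$ factor is absorbed into the logarithm's constant, matching the claim.

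The parts I expect to be routine are the Taylor and balancing estimates of the second paragraph. The one place that needs care is the first paragraph: turning the \emph{pointwise} hypothesis ``$x\mapsto\frac{v(x)}{x\tr v(x)}$ is $\beta$-Lipschitz on $\partial\K$'' into a genuine Lipschitz bound for $\nabla\gamma$ on the entire exterior, which rests on the elementary but easy-to-botch bound $\|x\|_2\le D\gamma(x)$ for $x\notin\K$ (used to control the Jacobian of the Minkowski projection) and on the homogeneity rescaling trick that lets us evaluate the gradient away from the non-smooth locus $\partial\K$ without changing its value. Verifying that the probe points genuinely stay outside $\K$ — so that one never falls back onto the kink where $\gamma\equiv1$ and the $C^1$ structure is lost — is the other point I would check explicitly.
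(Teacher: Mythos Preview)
Your proposal is correct and follows essentially the same approach as the paper: coordinate-wise forward differences $\frac{\tilde\gamma(x+he_i)-\tilde\gamma(x)}{h}$ with each $\gamma$-evaluation done by binary search, the per-coordinate error split into a Taylor remainder controlled by the Lipschitzness of $\nabla\gamma$ plus an evaluation-accuracy term, then balanced. You are in fact more careful than the paper on two points it glosses over --- deriving the Lipschitz constant of $\nabla\gamma$ from the hypothesis via the Jacobian of $\Pi_\gamma$ (yielding the extra factor $1+D/r$, whereas the paper simply asserts $\|\nabla^2\gamma\|\le\beta$), and using the $0$-homogeneity rescaling to guarantee all probe points stay outside $\K$ where the $C^1$ structure is available.
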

\begin{remark}
If $\K$ is a polytope defined by $\{x| h(x)\le 0\}$ where $h(x)=\max_{i=1}^m (x^{\top} \alpha_i+b_i)$, we can smooth the set to get a similar result. Details can be found in appendix.
\end{remark}

\begin{algorithm}
\caption{Estimating $\nabla \gamma(x)$ with smooth $\partial \K$}
\label{alg:smooth}
\begin{algorithmic}[1]
\STATE Input: $\lambda=\frac{1}{\sqrt{d} T^{2.5}}$, $\delta=\frac{1}{d T^{5}}$ and smoothness factor $\beta$ of $\partial \K$.
\STATE Estimate $\gamma(x)$ to $\delta$ accuracy.
\FOR{$i = 1, \ldots, d$}
\STATE Estimate $\gamma(x+\lambda e_i)$ to $\delta$ accuracy.
\STATE Estimate $\nabla \gamma(x)_i$ by $\frac{\gamma(x+\lambda e_i)-\gamma(x)}{\lambda}$
\ENDFOR
\end{algorithmic}
\end{algorithm}

Lemma \ref{smooth} also enables us to remove the expectation on the regret bound in previous theorems, because estimations to both $\gamma(x)$ and $\nabla \gamma(x)$ can be implemented deterministically, and we now have a uniform control on the estimation error of $\nabla \gamma(x)$.

When the set $\K$ is defined by $\{x| h(x)\le 0\}$ for some smooth function $h(x)$, if $h(x)$ itself is smooth it also satisfies the above assumption where $\frac{\nabla h(x)}{x^{\top}\nabla h(x)}$ is $\frac{D\beta}{r}$-Lipschitz on $\partial \K$. 

\begin{proof}[Proof of Theorem \ref{smooth theorem} ]
The proof is straightforward by noticing Algorithm \ref{alg:1} is an active algorithm. In fact, using the same analysis as in Theorem \ref{theorem: main}, the telescoping gives
$$
\sum_{i=s}^t \hat{f}_i(y_i) - \hat{f}_i(x^*) \leq \dfrac{\|y_s - x^\star\|_2^2}{2\eta} + \dfrac{\eta (t-s) \hat{G}^2}{2}+\frac{9GD^2 \eta}{T}
$$
The only question here is how to control the norm of $y_s$. We argue here that for any $t$ 
$$\|y_t\|_2\le \frac{r}{3}(2+\frac{3D}{r})^2+3D+2r=O(\frac{D^2}{r})$$
Use the definition of update rule, we get
$$\|y_{t+1}\|_2^2-\|y_t\|_2^2=\frac{r^2}{TG^2}\|\tilde{\nabla} \hat{f}_t\|_2^2-2y_t^{\top} \frac{r}{\sqrt{T}G}\tilde{\nabla} \hat{f}_t$$
Notice that when $y_t\notin \K$, $\frac{y_t^{\top}}{\|y_t\|_2} \nabla \gamma(y_t)= \frac{1}{\|\gamma(y_t)\|_2}$ which is in $[\frac{1}{D}, \frac{1}{r}]$, combine this with the fact that $f_t$ is $G$-Lipschitz, we have that 
$$
y_t^{\top} \tilde{\nabla} \hat{f}_t\ge y_t^{\top} \nabla \hat{f}_t-\frac{3GD}{rT^2}\|y_t\|_2\ge \|y_t\|_2 (\frac{3GD}{\|\gamma(y_t)\|_2}-G-\frac{3GD}{rT^2})\ge 1.5G\|y_t\|_2
$$
We also need to upper bound $\|\tilde{\nabla} \hat{f}_t\|_2^2$, which is $(G+\frac{3GD}{r}+\frac{3GD}{T^2})^2$.

As a result, when 
$$
\|y_t\|_2\ge \frac{r}{3}(2+\frac{3D}{r})^2
$$
we have that
$$
2y_t^{\top} \frac{r}{\sqrt{T}G}\tilde{\nabla} \hat{f}_t\ge \frac{r^2}{\sqrt{T}}(2+\frac{3D}{r})^2\ge\frac{r^2}{TG^2}\|\tilde{\nabla} \hat{f}_t\|_2^2
$$
and further $\|y_{t+1}\|_2\le \|y_t\|_2$.

Besides, the distance that $y_t$ can move in a single step is upper bounded by $(\frac{3GD}{r}+\frac{3GD}{T^2}+G)\frac{r}{\sqrt{T}G}\le 3D+2r$, which concludes the proof of the argument. Now, the choice of $\eta=\frac{D}{\sqrt{T}G}$ which is independent of $s,t$ gives an $O( \frac{D^3 G \sqrt{T}}{r^2} ) $ regret bound over all intervals $[s,t]$ simultaneously because $t-s\le T$.
\end{proof}

Simply using the previous estimation of $\nabla \gamma(x)$ (Lemma \ref{zaklemma2}) doesn't work, because its variance causes an unpleasant martingale representation of $y_t$, making it hard to bound its norm.

\section{Conclusion}
In this paper, we consider the problem of online convex optimization with membership oracles. We propose a simple lazy OGD algorithm that achieves $O(\sqrt{T})$ and $O(\log T)$ regret bounds for general convex and strongly-convex loss functions respectively, using $O(d\log T)$ number of calls to the membership oracle. We further utilize the active nature of our algorithm in adaptive regret minimization, achieving better bounds than known algorithms using linear optimization oracles. It remains open to try to match these regret bounds using linear optimization, or alternatively fewer membership oracle queries.

\section*{Acknowledgements}

The authors thank Dan Garber and Zak Mhammedi for helpful suggestions and typo corrections in an earlier version of this manuscript.  
Zhou Lu, Nataly Brukhim and Elad Hazan are supported by NSF grant number 2134040.


\bibliography{bib}
\bibliographystyle{alpha}

\newpage
\appendix

\section{Adaptive Regret Meta Algorithms}
We include here the algorithm boxes of the FLH and EFLH algorithms. Both Algorithm \ref{alg: adaptive strongly convex} and Algorithm \ref{alg: adaptive convex} are meta expert algorithms, with some black-box base algorithms run on specific intervals.

\begin{algorithm}[t]
\caption{Adaptive regret for strongly-convex loss without projection}
\label{alg: adaptive strongly convex}
\begin{algorithmic}[1]
\STATE Input: OCO algorithm $\mA$, active expert set $S_t$, horizon $T$, $\alpha=\frac{\lambda}{G^2}$ and constant $\epsilon>0$.
\STATE Set $\mA_j$ to be Algorithm \ref{alg:1} with $\eta_t=\frac{1}{\lambda t}$. \STATE Pruning rule: the horizon of $\mA_j$ is $2^{k+2}+1$ if $j=r 2^k$ where $r$ is an odd number.
\STATE Initialize: $S_1=\{1\}$, $p_1^1=1$.
\FOR{$t = 1, \ldots, T$}
\STATE Set $\forall j\in S_t$, $x_t^j$ to be the prediction of $\mA_j$.
\STATE Play $x_t=\sum_{j\in S_t} p_t^j x_t^j$.
\STATE Perform multiplicative weight update. For $j\in S_t$
$$
\tilde{p}_{t+1}^j=\frac{p_t^j e^{-\alpha f_t(x_t^j)}}{\sum_{i\in S_t} p_t^i e^{-\alpha f_t(x_t^i)}}
$$
\STATE Prune $S_t$ and add $\{t+1\}$ to get $S_{t+1}$. Initialize $\tilde{p}_{t+1}^{t+1}=\frac{1}{t}$, then
$$
\forall j\in S_{t+1}, p_{t+1}^j=\frac{\tilde{p}_{t+1}^j}{\sum_{i\in S_{t+1}} \tilde{p}_{t+1}^i}
$$
\ENDFOR
\end{algorithmic}
\end{algorithm}

\begin{algorithm}[t]
\caption{Strongly adaptive regret for general convex loss without projection}
\label{alg: adaptive convex}
\begin{algorithmic}[1]
\STATE Input: OCO algorithm $\mA$, active expert set $S_t$, horizon $T$ and constant $\epsilon>0$.
\STATE Let $\mA_{t,k}$ be an instance of $\mA$ initialized at $t$ with lifespan $4 l_k=4\lfloor 2^{(1+\epsilon)^k}/2 \rfloor+4$, for $2^{(1+\epsilon)^k}/2\le T$. Here $\mA_{t,k}$ is set to be Algorithm \ref{alg:1} with horizon $4l_k$ and $\eta=\frac{r}{2\sqrt{l_k}G}$.
\STATE Initialize: $S_1=\{(1,1),(1,2),...\}$, $w_1^{(1,k)}=\min \left\{\frac{1}{2},\sqrt{\frac{\log T}{l_k}}\right\}$.
\FOR{$t = 1, \ldots, T$}
\STATE Let $W_t=\sum_{(j,k)\in S_t} w_t^{(j,k)}$.
\STATE Play $x_t=\sum_{(j,k)\in S_t} \frac{w_t^{(j,k)}}{W_t} x_t^{(j,k)}$, where $x_t^{(j,k)}$ is the prediction of $\mA_{(j,k)}$.
\STATE Perform multiplicative weight update to get $w_{t+1}$. For $(j,k)\in S_t$
$$
w_{t+1}^{(j,k)}=w_t^{(j,k)} \left(1+\min \left\{\frac{1}{2},\sqrt{\frac{\log T}{l_k}}\right\} (f_t(x_t)-f_t(x_t^{(j,k)})) \right)
$$
\STATE Update $S_t$ according to the pruning rule. Initialize 
$$
w_{t+1}^{(t+1,k)}=\min \left\{\frac{1}{2},\sqrt{\frac{\log T}{l_k}}\right\}
$$
if $(t+1,k)$ is added to $S_{t+1}$ (when $l_k | t-1$).
\ENDFOR
\end{algorithmic}
\end{algorithm}

\section{Omitted Proofs}
\subsection{Proof of Lemma \ref{lemma:mink_bar_prop}}
\begin{proof}
For any two points $x_1,x_2\in \reals^d$, we let $\gamma_1=\gamma(x_1),\gamma_2=\gamma(x_2)$. For any $\lambda\in (0,1)$, let $z=\lambda x_1+(1-\lambda)x_2$. To establish convexity, we would like to prove that
\begin{equation}\label{eq:conv_gamma}
\gamma(z) \le \lambda \gamma_1+(1-\lambda)\gamma_2.    
\end{equation}
Since $\gamma(z)$ is the minimal value (greater than $1$) such that $z/\gamma(z) \in \K$, we get that proving that Equation \eqref{eq:conv_gamma} holds is equivalent to showing that,
\begin{equation}
    \frac{z}{\gamma(z)} = \frac{\lambda x_1 + (1-\lambda) x_2}{\lambda \gamma_1 + (1-\lambda) \gamma_2} \in \K.
\end{equation}
First, notice that the term $z/\gamma(z)$ can be rewritten as follows, 
\begin{equation}
    \frac{z}{\gamma(z)} = \frac{x_1}{\gamma_1}\cdot\Bigg(\frac{1}{1 + (\frac{1}{\lambda}-1)\frac{\gamma_2}{\gamma_1}}\Bigg) + \frac{x_2}{\gamma_2}\cdot\Bigg(
\frac{1}{1 + \frac{\lambda}{1-\lambda}\frac{\gamma_1}{\gamma_2}}\Bigg).
\end{equation} 
Then, it is easy to verify that for any $\lambda \in (0,1)$, and any $\gamma_1,\gamma_2 \ge 1$, both terms in the brackets are non-negative and sum to $1$. Moreover, since by definition we have that $\frac{x_1}{\gamma_1},\frac{x_2}{\gamma_2}\in \K$, we get that $\frac{z}{\gamma(z)}$ is a convex combination of elements in $\K$, and thus must be in $\K$ as well. This concludes the proof that $\gamma(x)$ is convex. \\

Next, to prove Lipschitzness, notice that when $x_1,x_2\in \K$ the argument is trivially true. Otherwise, assume without loss of generality that $x_2\notin \K$ (and any $x_1$). Decompose $x_2$ as,
\begin{equation}
x_2=x_1+r\frac{x_2-x_1}{\|x_2-x_1\|_2} \times \frac{\|x_2-x_1\|_2}{r}.    
\end{equation}
Then, we define $\alpha = \frac{r}{\|x_2-x_1\|_2+r\gamma_1}$, where $r$ is the radius of a ball $r \mathcal{B}_d$ that is  contained in $\K$ (recall assumption \ref{basic assumption}). Using out decomposition of $x_2$ above, we get that,
\begin{equation}
\alpha \cdot x_2= (\gamma_1\alpha) \cdot \frac{x_1}{\gamma_1}+(1-\gamma_1\alpha)\cdot r\frac{x_2-x_1}{\|x_2-x_1\|_2}.
\end{equation}
Observe that since $r\frac{x_2-x_1}{\|x_2-x_1\|_2} \in r \mathcal{B}_d$ then is it also contained in $\K$, by assumption \ref{basic assumption}. In addition, by definition we have $\frac{x_1}{\gamma_1} \in \K$. 
Thus, the term $\alpha \cdot x_2$ can be re-written as a convex combination of elements in $\K$, and is therefore contained in $\K$ as well. Since $\gamma_2$ is the minimal scalar such that division of $x_2$ by it is in $\K$, we get that $\gamma_2 \le 1/\alpha$. That is,   
\begin{equation}
    |\gamma_2 - \gamma_1|  \le \frac{1}{r} \|x_2-x_1\|_2,
\end{equation}
which concludes the proof. 
\end{proof}

\subsection{Proof of Lemma \ref{property}}

\begin{proof}
In the exact case, denote $\gamma_t:= \gamma(y_{t-1})$. Then, we have that 
\begin{align*}
     \hat{f}_t(x_t) -  \hat{f}_t(y_t) & = f_t(x_t) - f_t(y_t) + 3G D  (\gamma(x_t) -1) - 3G D (\gamma_t - 1)  \tag{definition of $\hat{f}_t$} \\
&=  f_t(x_t) - f_t(y_t)  - 3G D(\gamma_t-1)\tag{$\gamma(x_t)=1$} \\
&\leq  G \|y_t - x_t\|_2  - 3G D (\gamma_t-1) \tag{$f_t$ is $G$-Lipschitz} \\
& = G  (\gamma_t - 1) \|x_t\|_2- 3G D (\gamma_t-1) \tag{$y_t = \gamma_t x_t$} \\
& = G  (\gamma_t - 1) \big(\|x_t\|_2 - 3D \big)  \\
&\le  0 \tag{$\|x_t\|_2 \le 3D, \gamma_t \ge 1$}
\end{align*}
We notice that a similar argument also holds for the approximate version of $\gamma_t$: just replace $x_t=\frac{y_t}{\gamma_t}$ by $\tilde{x}_t=\frac{y_t}{\tilde{\gamma}_t}$ and use the fact that $\|x_t-\tilde{x}_t \|_2\le \frac{1}{T^2}$, which gives $\hat{f}_t(\tilde{x}_t) -  \hat{f}_t(y_t)\le \frac{Gr+3GD}{rT^2}$. This supports our previous argument that the overall difference on the accumulated loss is only $O(1)$ and is negligible.
\end{proof}

\subsection{Proof of Lemma \ref{zaklemma1}}
\begin{proof}
Without loss of generality we only consider the case $x\notin \K$. We use the simple binary search method: the first call to the oracle gives $x\notin \K$, and we save two points $x_{in}$ and $x_{out}$, initialized as $0$ and $x$. For the $t+1$-th call to the oracle, we query $\frac{1}{2}(x_{in}+x_{out})$ and set it as $x_{in}$ or $x_{out}$ according to whether it's in $\K$ or not.

It follows that after the $t$-th call to the oracle, $\|x_{in}-x_{out}\|_2\le \frac{2D}{2^t}$ which implies that $x_{in}$ is an $\frac{2D}{2^t}$-approximation to $\Pi_{\gamma}(x)$. In other words, one can use $\log_2 (\frac{2D}{\epsilon})$ calls to the membership oracle to get an $\epsilon$-approximation to $\frac{x}{\gamma(x)}$. To put it more general, one can use $\log_2 (\frac{2D^2}{r^2 \epsilon})$ calls to the membership oracle to get an $\epsilon$-approximation to $\gamma(x)$.
\end{proof}

\subsection{Proofs of Theorem \ref{ada1} and Theorem \ref{ada2}}
\begin{proof}
Two theorems from previous works are introduced first, in which we will use our Algorithm \ref{alg:1} as a black-box to obtain the desired regret bounds.

\begin{theorem}[Theorem 3.1 in \cite{hazan2009efficient}]
If all loss functions are $\alpha$-exp concave then the FLH algorithm attains adaptive regret of $O(\frac{\log^2 T}{\alpha})$. The number of experts per iteration is $O(n^3 \log T )$.
\end{theorem}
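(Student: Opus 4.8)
The plan is to decompose the adaptive regret on an arbitrary interval $I=[r,s]$ into a \emph{tracking} term, measuring how well the expert mixture competes with any single alive expert, and a \emph{base} term, measuring the regret of that expert against the best fixed point in hindsight; each will cost $O(\log T/\alpha)$, and chaining the $O(\log T)$ experts needed to cover $I$ under the pruning rule of Algorithm~\ref{alg: adaptive strongly convex} will produce the final $O(\log^2 T/\alpha)$ bound.

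\textbf{Mixing via exp-concavity, and tracking one expert.} Since $x\mapsto e^{-\alpha f_\tau(x)}$ is concave, Jensen applied to $x_\tau=\sum_{j\in S_\tau}p_\tau^j x_\tau^j$ gives $e^{-\alpha f_\tau(x_\tau)}\ge Z_\tau:=\sum_{j\in S_\tau}p_\tau^j e^{-\alpha f_\tau(x_\tau^j)}$, i.e.\ $f_\tau(x_\tau)\le -\tfrac1\alpha\log Z_\tau$, so $\alpha\sum_{\tau\in I}f_\tau(x_\tau)\le -\sum_{\tau\in I}\log Z_\tau$. From the normalized multiplicative-weights update together with the injection of each new expert at initial mass $\approx 1/\tau$, one has $p_{\tau+1}^{j_0}=\tfrac{\tau}{\tau+1}\cdot\frac{p_\tau^{j_0}e^{-\alpha f_\tau(x_\tau^{j_0})}}{Z_\tau}$ throughout the lifespan of a fixed expert $j_0$. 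Taking logarithms and telescoping, the $\tfrac{\tau}{\tau+1}$ factors contribute $\log\tfrac{r}{s+1}\le 0$, the final weight is $\le 1$, and the initial weight is $\ge 1/\poly(T)$; rearranging yields $\sum_{\tau}f_\tau(x_\tau)\le \sum_{\tau}f_\tau(x_\tau^{j_0})+O(\log T/\alpha)$ over the interval on which $j_0$ is alive. The base expert $\mA_{j_0}$ is Online Newton Step, whose standard analysis for $\alpha$-exp-concave losses gives $\sum_\tau f_\tau(x_\tau^{j_0})-\min_{x\in\K}\sum_\tau f_\tau(x)=O(\log T/\alpha)$ over its run (absorbing dimension factors; the $n^3$ of the statement is the per-step cost of the Newton matrix update rather than a regret factor), so adding the two bounds controls the adaptive regret on any sub-interval contained in a single expert's lifespan.

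\textbf{Covering a general interval and running time.} The dyadic pruning rule keeps only $O(\log T)$ experts alive at any round, while guaranteeing that $[r,s]$ decomposes into $O(\log(s-r))=O(\log T)$ consecutive sub-intervals, each lying entirely inside the lifespan of some then-alive expert. Applying the previous bound of $O(\log T/\alpha)$ on each piece and summing gives adaptive regret $O(\log^2 T/\alpha)$. Since each active Newton-step update costs $O(n^3)$ and there are $O(\log T)$ active experts, the per-round work is $O(n^3\log T)$, matching the statement.

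\textbf{Main obstacle.} The delicate point is the interaction of pruning with tracking: one must verify that the covering sub-intervals can always be chosen inside single active lifespans and that the chained bound does not accumulate extra logarithmic factors, and in the telescoping step one must check that the repeated injection of later experts never drives $p^{j_0}$ below $1/\poly(T)$ within $j_0$'s lifespan — equivalently, that the fixed-share--style mixing loss is only $O(\log T)$ rather than $O(T)$. The exp-concavity mixing inequality and the ONS base-regret bound are, by contrast, routine.
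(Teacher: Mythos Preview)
The paper does not supply its own proof of this statement: it appears in the appendix solely as a quoted result (``Theorem~3.1 in \cite{hazan2009efficient}'') and is invoked as a black box in the derivation of Theorem~\ref{ada1}. There is therefore no proof in the present paper to compare against; what you have written is a reconstruction of the argument from the original FLH paper of Hazan and Seshadhri.

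That reconstruction is essentially correct and captures the three standard ingredients (exp-concave mixing, fixed-share tracking, dyadic covering). One point worth tightening: in the covering step you need each sub-interval of $[r,s]$ to \emph{begin at the birth time} of its associated expert, not merely to lie inside that expert's lifespan, because the ONS base learner only controls regret on prefixes of its own run, not on arbitrary internal windows. The dyadic pruning rule does provide exactly such a chain $r=t_0<t_1<\dots<t_m\le s$ with $m=O(\log T)$ where expert $t_i$ is born at $t_i$ and survives past $t_{i+1}$, so the argument goes through once this is stated. Your reading of the ``$O(n^3\log T)$'' figure as per-round computation (the $O(n^3)$ Newton-step matrix update times $O(\log T)$ active experts) rather than a literal expert count is also the correct one.
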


\begin{theorem}[Theorem 6 in \cite{lu2022efficient}]
Given an OCO algorithm $\mA$ with regret bound $cGD  \sqrt{T}$ for some constant $c\ge 1$, the adaptive regret of EFLH is bounded by $40 cGD \sqrt{\log T} |I|^{\frac{1+\epsilon}{2}}$ for any interval $I\subset [1,T]$. The number of experts per round is $O(\log \log T/\epsilon)$.
\end{theorem}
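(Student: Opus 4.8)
The plan is to prove this by the classical ``interval covering $+$ sleeping experts'' template of Daniely--Sarwate--Seldin, exploiting the doubly-exponential lifespan schedule that is the whole point of EFLH. Recall the structure of Algorithm~\ref{alg: adaptive convex}: for each scale $k$ there is a fresh instance of $\mA$ of lifespan $4l_k$, where $l_k=\lfloor 2^{(1+\epsilon)^k}/2\rfloor$, spawned every $l_k$ rounds, and the outer layer is a ``prod''-type multiplicative weight update whose learning rate and initial weight both equal $\mu_k=\min\{\frac{1}{2},\sqrt{\log T/l_k}\}$ for the scale-$k$ experts. First I would settle the expert count: $l_k\le T$ forces $(1+\epsilon)^k\le 1+\log_2 T$, so the number of scales is $K=O(\log_{1+\epsilon}\log T)=O(\log\log T/\epsilon)$; and since a scale-$k$ expert lives $4l_k$ rounds while one is born every $l_k$ rounds, at most $4$ scale-$k$ experts are alive at once, giving $|S_t|=O(\log\log T/\epsilon)$, the second claim.

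For the regret bound, fix an interval $I=[s,t]$ with $L:=t-s$ and a comparator $x^\star\in\K$; assume $L\ge l_0$, as otherwise $I$ has $O(1)$ length and the bound is immediate from Lipschitzness. Let $k^\star:=\max\{k:\ l_k\le L\}$ be the coarsest scale that still fits in $I$. The scale-$k^\star$ birth times cut the timeline into consecutive blocks of length $l_{k^\star}$; writing $P_1,\dots,P_m$ for the nonempty intersections of these blocks with $I$, we have $m\le\lceil L/l_{k^\star}\rceil+1\le L/l_{k^\star}+2$, and each $P_j$ lies inside the lifespan of one concrete scale-$k^\star$ expert $E_j$ (its $4l_{k^\star}$-round life comfortably contains a block of length $l_{k^\star}$). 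On each piece I would decompose
\begin{align*}
\sum_{i\in P_j}\big(f_i(x_i)-f_i(x^\star)\big)
\ =\ \sum_{i\in P_j}\big(f_i(x_i)-f_i(x_i^{E_j})\big)
\ +\ \sum_{i\in P_j}\big(f_i(x_i^{E_j})-f_i(x^\star)\big),
\end{align*}
and bound the second sum — the own regret of $E_j$ on a sub-interval of its run — by the assumed guarantee of $\mA$, using that this bound in fact holds on every sub-interval of a run of the given length (true for fixed-step OGD, hence for Algorithm~\ref{alg:1}): it is $\le cGD\sqrt{4l_{k^\star}}$.

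The crux, and the step I expect to be the main obstacle, is bounding the first sum: the meta-regret of the multiplicative-weights layer against the single expert $E_j$ over the $\le l_{k^\star}$ rounds of its life that meet $I$. Here I would run the standard ``prod'' potential argument: rescaling per-round losses to $[0,1]$ (restoring the $GD$ factor at the end), the inequality $\ln(1+\mu_{k^\star}z)\ge\mu_{k^\star}z-\mu_{k^\star}^2 z^2$ for $\mu_{k^\star}z\ge-\frac12$, summed over $P_j$, relates $\sum_{i\in P_j}(f_i(x_i)-f_i(x_i^{E_j}))$ to the change of $\ln w^{E_j}$; convexity of $f_i$ via Jensen keeps the surviving aggregate weight from growing under each update, and the per-round renormalization (as new experts enter $S_t$) costs only a bounded multiplicative factor over $E_j$'s \emph{short} $4l_{k^\star}$-round life — which is exactly why the lifespans are truncated to $4l_k$ rather than $T$. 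This yields
$$\sum_{i\in P_j}\big(f_i(x_i)-f_i(x_i^{E_j})\big)\ \le\ GD\Big(\frac{O(\log T)}{\mu_{k^\star}}+\mu_{k^\star}l_{k^\star}\Big)\ =\ O\big(GD\sqrt{l_{k^\star}\log T}\big),$$
using $\ln(1/\mu_{k^\star})=O(\log T)$ and $\mu_{k^\star}=\sqrt{\log T/l_{k^\star}}$. Pinning down the explicit constant $40$ is the genuinely bookkeeping-heavy part: it requires carrying the renormalization losses and the two terms above with care.

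Finally I would sum over the $m$ pieces. Each contributes $O(GD\sqrt{l_{k^\star}\log T})+cGD\sqrt{4l_{k^\star}}=O(cGD\sqrt{l_{k^\star}\log T})$, so the adaptive regret on $I$ is $O(cGD\sqrt{\log T})\cdot(L/\sqrt{l_{k^\star}}+\sqrt{l_{k^\star}})$. By maximality of $k^\star$ we have $l_{k^\star+1}>L$, and since the schedule obeys $2l_{k+1}\approx(2l_k)^{1+\epsilon}$ this gives $l_{k^\star}=\Omega(L^{1/(1+\epsilon)})$; hence $L/\sqrt{l_{k^\star}}=O(L^{\,1-1/(2(1+\epsilon))})=O(L^{(1+\epsilon)/2})$, where the last step is the elementary inequality $1+2\epsilon\le(1+\epsilon)^2$, and trivially $\sqrt{l_{k^\star}}\le\sqrt L\le L^{(1+\epsilon)/2}$. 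Combining, the adaptive regret on $I$ is $O(cGD\sqrt{\log T}\,|I|^{(1+\epsilon)/2})$, which after the constant-tracking in the previous paragraph is the claimed $40\,cGD\sqrt{\log T}\,|I|^{(1+\epsilon)/2}$.
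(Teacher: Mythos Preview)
The paper does not prove this statement. It is quoted verbatim as Theorem~6 of \cite{lu2022efficient} and invoked as a black box in the appendix proof of Theorems~\ref{ada1} and~\ref{ada2}: the entire ``proof'' there consists of citing this result and plugging in Algorithm~\ref{alg:1} as the base learner $\mA$. There is therefore nothing in the present paper to compare your argument against.

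That said, your sketch is a reasonable reconstruction of the kind of argument one expects to find in \cite{lu2022efficient}: the doubly-exponential schedule bounding the scale count at $O(\log\log T/\epsilon)$, the covering of $I$ by $O(L/l_{k^\star})$ scale-$k^\star$ blocks, the prod-style potential argument for the meta-layer, and the key estimate $l_{k^\star}=\Omega(L^{1/(1+\epsilon)})$ from maximality of $k^\star$. One caveat: you rely on the base algorithm having bounded regret on \emph{every sub-interval} of its run, not just over its full horizon, and you justify this by pointing to fixed-step OGD. That is fine for this paper's purposes (Algorithm~\ref{alg:1} is exactly of that form), but it is a slightly stronger hypothesis than the theorem statement, which only assumes a full-horizon regret bound. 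The standard way around this is to choose, for each piece $P_j$, an expert \emph{born} at or just before the start of $P_j$, so that $P_j$ is a prefix of that expert's run and the full-horizon bound applies directly; your block construction already makes this work for the interior pieces, and the boundary pieces cost only an extra $O(GD\sqrt{l_{k^\star}})$.
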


We first define the exp-concavity of loss function.

\begin{definition}
A function $f(x)$ is $\alpha$-exp-concave if $e^{-\alpha f(x)}$ is convex.
\end{definition}

A notable fact is that under Assumption \ref{basic assumption}, any $\lambda$-strongly-convex function is also $\frac{\lambda}{G^2}$-exp-concave \cite{hazan2016introduction}.

For Theorem \ref{ada1}, we use the fact that the loss functions are also $\frac{\lambda}{G^2}$-exp-concave. By Theorem 3.1 in \cite{hazan2009efficient} and Theorem \ref{theorem: strong}, if we treat our Algorithm \ref{alg:1} as the experts in FLH, the total regret is bounded by $O(\frac{G^2D^2 \log^2 T}{r^2\lambda}+\frac{G^2 \log^2 T}{\lambda})$. Noticing $r=O( D)$, we reach our conclusion.

Theorem \ref{ada2} is a direct consequence of Theorem \ref{theorem: main} and Theorem 6 from \cite{lu2022efficient} by treating our Algorithm \ref{alg:1} as the experts in EFLH. Notice that the EFLH is a multiplicative weight algorithm over experts (instances of Algorithm \ref{alg:1}) and doesn't require extra projections.
\end{proof}

\subsection{Proof of Lemma \ref{smooth gradient}} 
\begin{proof}
To see this, we compare the values of $\gamma(x)$ and $\gamma(x+\epsilon v)$ given a small constant $\epsilon$ and a unit vector $v$, and see which direction increases the function value the most. We have that 
$$
\Pi_{\gamma}(x+\epsilon v)= \Pi_{\gamma}(x)+\epsilon \frac{\|\Pi_{\gamma}(x)\|_2}{\|x\|_2} (v-v^{\top}v(\Pi_{\gamma}(x))v(\Pi_{\gamma}(x))) +O(\epsilon^2)
$$
due to the fact that $\partial \K$ is smooth and at a neighborhood of $x$, its tangent plane is a good linear approximation of the boundary. As a result
$$
\gamma(x+\epsilon v)=\frac{\|x+\epsilon v\|_2}{\|\Pi_{\gamma}(x)+\epsilon \frac{\|\Pi_{\gamma}(x)\|_2}{\|x\|_2} (v-v^{\top}v(\Pi_{\gamma}(x))v(\Pi_{\gamma}(x))) +O(\epsilon^2)\|_2}
$$
We are actually dealing with the $2$-dimensional plane spanned by $v$ and $v(\Pi_{\gamma}(x))$, therefore without loss of generality we can do a coordinate change to only consider this $2$-dimensional plane and use $x=(\|x\|_2,0), \Pi_{\gamma}(x)=(\|\Pi_{\gamma}(x)\|_2,0)$, and parameterize $v$ and $v(\Pi_{\gamma}(x))$ by $\theta_1, \theta_2$. After standard calculation and simplification we have that 
$$
\gamma(x+\epsilon v)^2=\frac{\|x\|_2^2+2\epsilon\|x\|_2\cos \theta_1}{\|\Pi_{\gamma}(x)\|_2^2+2\epsilon \frac{\|\Pi_{\gamma}(x)\|_2^2}{\|x\|_2}\sin \theta_2 (\sin \theta_2 \cos \theta_1-\cos \theta_2 \sin \theta_1) }
$$
Maximizing it is equivalent to maximizing
$$
\frac{\|x\|_2+2\epsilon \cos \theta_1}{\|x\|_2+2\epsilon \sin \theta_2 (\sin \theta_2 \cos \theta_1-\cos \theta_2 \sin \theta_1) }
$$
which is further equivalent to maximizing
$$
\sin \theta_2 \sin \theta_1+\cos \theta_2 \cos \theta_1
$$
by ignoring second-order terms. It is maximized when taking $\theta_1=\theta_2$, or in other words, $\nabla \gamma(x)$ is along the same direction of $v(\Pi_{\gamma}(x))$. An easier way to think of it is by slightly modifying the definition of $\gamma(x)$ to be the original Minkowski functional
$$
\inf \{\gamma>0 : \frac{x}{\gamma}\in \K\}
$$
then $\gamma(x)$ defines a norm and $\partial \K$ is exactly the solution set of $\gamma(x)=1$. Then the gradient of $\gamma(x)$ is the normal vector to this norm-1 level set.

We haven't determine the magnitude of the gradients yet. From Lemma 6 in \cite{mhammedi2022efficient}, we know that for any $x\notin \K$
$$
\nabla \gamma(x)=\argmax_{s\in \hat{\K}} s^{\top} x
$$
where $\hat{\K}$ is the polar set of $\K$ defined as
$$
\hat{\K}=\{s\in \reals^d| s^{\top} x\le 1, \forall x\in \K\}
$$
From this we know the magnitude has to be $\frac{1}{\Pi_{\gamma}(x)^{\top} v(\Pi_{\gamma}(x))}$, and the gradient $\nabla \gamma(x)$ is really just 
$$
\frac{v(\Pi_{\gamma}(x))}{\Pi_{\gamma}(x)^{\top} v(\Pi_{\gamma}(x))}
$$
\end{proof}

\subsection{Proof of Lemma \ref{smooth}}
\begin{proof}
From Lemma \ref{zaklemma1} we already know one can use $\log_2 (\frac{2D^3}{r^4 \epsilon})$ calls to the membership oracle to get an $\epsilon$-approximation to $\gamma(x)$. We now use it to estimate $\nabla \gamma(x)$: for any $i\in [d]$ and a small constant $\lambda$, we have the following inequality for some $x'$.
$$
|\gamma(x+\lambda e_i)-\gamma(x)- \lambda \nabla\gamma(x)_i|\le \frac{\lambda^2}{2}\|\nabla^2 \gamma(x') \|_2^2
$$
From the smoothness assumption we know that $\|\nabla^2 \gamma(x') \|_2^2\le \beta^2$. Then the estimation error of $\nabla \gamma(x)_i$ can be upper bounded by $\frac{\lambda \beta^2}{2}+\frac{2\delta}{\lambda}$ where $\delta$ is the precision of $\gamma(x)$. Take $\delta=\frac{1}{dT^5}$ and $\lambda=\frac{1}{\sqrt{d} T^{2.5}}$, the estimation error to $\nabla \gamma(x)$ is bounded by $\frac{1}{\sqrt{d}T^2}$ while we only use $O(d\log T)$ calls to the membership oracle by Lemma \ref{zaklemma1}, that's $O(\log T)$ calls for each coordinate $i$. Since the estimation error for each coordinate is now bounded by $\frac{1}{\sqrt{d}T^2}$, the overall error is bounded by $$\sqrt{d}\times \frac{1}{\sqrt{d}T^2}=\frac{1}{T^2}$$

\end{proof}

\section{Set Smoothing for Polytopes}

Let $\K \subset \reals^d$ be defined as a level set of a function $h$, 
$$ \K = \{ x  \in \reals^d | h(x) \leq 0\} . $$
where $h(x)=\max_{i=1}^m (x^{\top} \alpha_i+b_i )$ and each $\alpha_i$ is a unit vector. Consider the smoothed set 
$$
\K_a=\{ x  \in \reals^d | h_a(x) \leq 0\} 
$$
where $h_a(x)$ is defined as 
$$
h_a(x)=\frac{1}{a}\log (\sum_{i=1}^m e^{a (x^{\top} \alpha_i+b_i)})
$$

We additionally assume that we have access to the linear constraints of $h(x)$, so that the oracle to $\K$ is stronger than just membership, but still weaker than projection.

We will show several properties of this smoothing method in order to determine the value of $a$. First, the new function $h_a(x)$ is $O(m a )$-smooth. Second, we notice that $\K_a$ is a subset of $\K$, while it also contains
$$ \{ x  \in \reals^d | h(x) \leq -\frac{\log m}{a}\}  $$
because $$h_a(x)\le \frac{1}{a}\log (m e^{a h(x)})\le h(x)+\frac{\log m}{a}.$$ In addition, for any $x$ such that $h(x)=0$, there exists $x'$ such that $h(x) \leq -\frac{\log m}{a}$ and $\|x-x'\|_2\le \frac{D\log m}{r a}$, simply by shrinking $x$ along the same direction and using the geometric assumptions on $\K$.

Now we choose $a=T^3$. Because the loss function is Lipschitz, replacing $\K$ by $\K_a$ can only hurt the regret by $O(T \tilde{G} \frac{D\log m}{rT^3})=O(\frac{1}{T^2})$ which is negligible, using a similar argument as in the proof of Theorem \ref{theorem: main}. 

Meanwhile, the smoothness constant being now $O(T^3)$ isn't an issue because we can multiply an additional $\frac{1}{T^3}$ factor to the method in Lemma \ref{smooth} to get a similar guarantee, i.e. choosing $\delta=\frac{1}{dT^{11}}$ and $\lambda=\frac{1}{\sqrt{d} T^{5.5}}$ using still $O(d \log T)$ membership oracle calls. 

Finally, because $h_a(x)$ has a clear closed form and can be exactly computed by $m$ computations of the linear constraints in $h(x)$, we can implement an exact membership oracle to $\K_a$ using $m$ times more calls. In all, we can achieve the same guarantee as in Lemma \ref{smooth} using $O(m d\log T)$ number of calls to the membership oracle instead.

\end{document}